\newcites{Appendix}{Additional References}
\newtheorem{proposition}{Proposition}
\newtheorem{theorem}{Theorem}
\tikzset{
  vertex/.style={draw, circle,minimum size=0.5cm, inner sep=0pt, anchor=center},
  edge/.style={black,line width=0.1mm},
  vertex-small/.style={fill,inner sep=1.1pt, circle},
  vertex-large/.style={draw, circle,minimum size=0.5cm, inner sep=0pt,anchor=south},
  vertex-large-dead/.style={vertex-large,line width=0.5mm},
  flow/.style={->,line width=0.7mm},
  matched/.style={edge,line width=0.5mm,color={rgb:red,3;green,1;blue,0}},
  cost/.style={rectangle,fill,color=white,draw=black,text=black,inner sep=2pt,anchor=center},
  matching-box/.style={rectangle,minimum size=1.1cm,draw,anchor=north,color={rgb:red,0.1;green,0.1;blue,0.1},line width=0.1mm},
}
\title{Solving Simultaneous Target Assignment and Path Planning Efficiently\\with Time-Independent Execution}
\author{
  Keisuke Okumura,
  Xavier D\'{e}fago \\
}
\begin{document}

\maketitle
\begin{abstract}
  Real-time planning for a combined problem of target assignment and path planning for multiple agents, also known as the unlabeled version of Multi-Agent Path Finding (MAPF), is crucial for high-level coordination in multi-agent systems, e.g., pattern formation by robot swarms.
  This paper studies two aspects of unlabeled-MAPF:
  (1)~offline scenario:~solving large instances by centralized approaches with small computation time, and
  (2)~online scenario:~executing unlabeled-MAPF despite timing uncertainties of real robots.
  For this purpose, we propose \emph{TSWAP}, a novel sub-optimal complete algorithm, which takes an arbitrary initial target assignment then repeats one-timestep path planning with target swapping.
  TSWAP can adapt to both offline and online scenarios.
  We empirically demonstrate that \emph{Offline TSWAP} is highly scalable; providing near-optimal solutions while reducing runtime by orders of magnitude compared to existing approaches.
  In addition, we present the benefits of \emph{Online TSWAP}, such as delay tolerance, through real-robot demos.
\end{abstract}
\section{Introduction}
Target assignment and path planning for multiple agents, i.e., deciding where to go and how to go, are fundamental problems to achieve high-level coordination in multi-agent systems.
This composite problem has attractive applications such as automated warehouses~\cite{wurman2008coordinating}, robot soccer~\cite{macalpine2015scram}, pattern formation of robot swarms~\cite{turpin2014goal,honig2018trajectory}, a robot display~\cite{alonso2012image}, to name just a few.
These applications typically require real-time planning, i.e., planners have a limited time for deliberation until deadlines.

The problem above is a non-trivial composition of two fundamental problems: 1)~\emph{target assignment} is well-studied~\cite{gerkey2004formal} with well-known efficient algorithms, such as the Hungarian algorithm~\cite{kuhn1955hungarian}; 2)~\emph{path planning}, also known as \emph{Multi-Agent Path Finding (MAPF)}~\cite{stern2019def}, has been extensively studied in recent years.
Given a graph, a set of agents, their initial locations, and their targets, a solution of MAPF maps collision-free paths to agents.
This ``labeled'' MAPF regards targets as being assigned to each agent.
This paper studies the ``unlabeled'' version of MAPF (\emph{unlabeled-MAPF}) which considers agents and targets to be distinct, and hence requires to assign a target to each agent.
In both labeled or unlabeled cases, the main objective is to minimize \emph{makespan}, i.e., the maximum arrival time of agents.

Paradoxically, finding makespan-optimal solutions for unlabeled-MAPF is easier than for MAPF which is known to be NP-hard~\cite{yu2013structure,ma2016multi}. Indeed, unlabeled-MAPF has a polynomial-time optimal algorithm based on a reduction to maximum flow~\cite{yu2013multi};
however, the size of the flow network is quadratic to the size of the original graph, making practical problems in large graphs (e.g., $500\times 500$ grid) still challenging.
Despite its importance, unlabeled-MAPF has received little attention compared to conventional MAPF, for which many scalable sub-optimal solvers have been developed~\cite{surynek2009novel,wang2011mapp,de2013push,okumura2019priority}.

The first objective of this paper is thus to \emph{propose a centralized approach to solve large unlabeled-MAPF instances with sufficiently good quality in small computation time}.
We present \emph{Offline TSWAP}, a sub-optimal complete algorithm.
Specifically, Offline TSWAP uses arbitrary assignment algorithms, then repeats one-timestep path planning with target swapping until all agents have reached targets.

We further extend TSWAP to an \emph{online} version, aiming at \emph{executing unlabeled-MAPF despite timing uncertainties of real robots}; the second objective of this paper.
In practice, plan execution on robots is subject to timing uncertainties (e.g., kinematic constraints, unexpected delays, friction, clock drift).
Even worse, the potential for unexpected interference increases with the number of agents because agents' actions usually depend on each other's;
hence perfect on-time execution is unlikely to be expected.

To overcome this problem, we propose \emph{Online TSWAP}, an online version of TSWAP based on the concept of time-independent planning~\cite{okumura2021time}. In other words, it abandons all timing assumptions (e.g., synchronization, traveling time, rotation time, delay probabilities) and regards the whole system as a transition system that changes its configuration according to atomic actions of agents.
Regardless of movement timings, TSWAP ensures that all targets are eventually reached.

Our main contribution is proposing TSWAP to solve or execute unlabeled-MAPF, specifically;
(1)~\emph{offline scenario:}~we propose a novel algorithm and empirically demonstrate that TSWAP is scalable and can yield near makespan-optimal solutions while reducing runtime by orders of magnitude in most cases when compared to the polynomial-time optimal algorithm~\cite{yu2013multi}.
Furthermore, TSWAP also yields good solutions with respect to \emph{sum-of-costs}, another commonly used metric in MAPF studies.
(2)~\emph{online scenario:}~we formulate an online time-independent problem and propose a complete algorithm.
We show the benefits of TSWAP, such as time independence and delay tolerance, through real-robot demos.
Incidentally, (3) we also present efficient assignment algorithms with lazy evaluation of distances, assuming to use with TSWAP.

The paper is structured as follows.
Section~\ref{sec:related-work} summarizes related work about unlabeled-MAPF and time-independent execution methods.
Section~\ref{sec:problem-definition} formalizes offline and online time-independent problems of unlabeled-MAPF.
Section~\ref{sec:offline-tswap} presents Offline TSWAP and its theoretical analysis.
Section~\ref{sec:target-assignment} presents assignment algorithms with lazy evaluation.
Section~\ref{sec:evaluation} presents empirical results of offline planning.
Section~\ref{sec:online-tswap} presents Online TSWAP.
Section~\ref{sec:demo} presents robot demos of online planning.
Section~\ref{sec:conclusion} concludes the paper.
The technical appendix, code, and video are available on \url{https://kei18.github.io/tswap}.

\section{Related Work}
\label{sec:related-work}
\subsection{Target Assignment and Path Planning}
The unlabeled-MAPF problem, also known as \emph{anonymous MAPF}, consists of two sub-problems:
(1)~target assignment, more generally, task allocation, and (2)~path planning.
The \emph{multi-robot task allocation} problems are a mature field~\cite{gerkey2004formal}.
Path planning for multiple agents, embodied as MAPF, has been actively studied in recent years~\cite{stern2019def}.
We focus on reviews of related studies covering both aspects.

Unlike conventional MAPF, unlabeled-MAPF is always solvable~\cite{kornhauser1984coordinating,yu2013multi,adler2015efficient,ma2016multi}.
Among them, TSWAP relates to the analysis presented by~\citet{yu2013multi} because both approaches use target swapping.
Their analysis relies on optimal linear assignment whereas TSWAP works for any assignments.
They also showed that unlabeled-MAPF has a \emph{Pareto optimal structure} for makespan and sum-of-costs metrics (summation of traveling time of each agent; see the next section), i.e., there is an instance for which it is impossible to optimize both metrics simultaneously.
Furthermore, they present a polynomial-time makespan-optimal algorithm, in contrast with conventional MAPF being known to be NP-hard~\cite{yu2013structure,ma2016multi}.

The \emph{combined target assignment and path finding (TAPF)} problem~\cite{ma2016optimal}, also called as \emph{colored MAPF}~\cite{bartak2021classical}, generalizes both MAPF and unlabeled-MAPF by partitioning the agents into teams.
The paper proposes a makespan-optimal algorithm for TAPF that combines Conflict-based Search (CBS)~\cite{sharon2015conflict}, a popular optimal MAPF algorithm, with an optimal algorithm for unlabeled-MAPF.
\citet{honig2018conflict} studied a sum-of-costs optimal algorithm for TAPF by extending CBS.
They also proposed a bounded sub-optimal algorithm, called ECBS-TA; we compare TSWAP with ECBS-TA in the experiment.
There is a study~\cite{wagner2012subdimensional} using another optimal MAPF algorithm~\cite{wagner2015subdimensional} to solve the joint problem of target assignment and path planning.

The \emph{multi-agent pickup and delivery (MAPD)} problem~\cite{ma2017lifelong}, motivated by applications in automated warehouses~\cite{wurman2008coordinating}, aims at making agents convey packages and has to solve target assignment and path planning jointly.
Many approaches to MAPD have been proposed, e.g., \cite{ma2017lifelong,liu2019task,okumura2019priority}.
Although MAPD is a problem different from unlabeled-MAPF, TSWAP is similar to an MAPD algorithm TPTS~\cite{ma2017lifelong} in the sense that both algorithms swap assigned targets adaptively.
One difference though is that, unlike TSWAP, TPTS sets additional conditions about start and target locations.

MAPF is a kind of \emph{pebble motion} problem, in which objects are moved on a graph one-at-a-time, like a sliding tile puzzle.
The unlabeled version of pebble motion has also been studied~\cite{kornhauser1984coordinating,cualinescu2008reconfigurations,goraly2010multi}.
However, in unlabeled-MAPF, agents can move simultaneously;
different from those studies, TSWAP explicitly assumes this fact, resulting in practical outcomes.

Pattern formation of multiple agents~\cite{oh2015survey} is one of the motivating examples of unlabeled-MAPF.
Various approaches have been studied, e.g.,~\cite{alonso2011multi,wang2020shape}.
We highlight two studies closely related to ours as follows.
SCRAM~\cite{macalpine2015scram} is a target assignment algorithm considering collisions and works only in open space without obstacles; hence its applications are limited.
The assignment algorithm in this paper (Alg.~\ref{algo:target-assignment}) uses a scheme similar to SCRAM but differs in its use of lazy evaluation.
\citet{turpin2014goal} proposed a method that first solves the lexicographic bottleneck assignment~\cite{burkard1991lexicographic} then plans trajectories on graphs.
To avoid collisions, the method uses the delay offset about when agents start moving, resulting in a longer makespan.
TSWAP avoids using such offsets by swapping targets on demand.

\subsection{Execution without Timing Assumptions}
\citet{ma2017multi} studied robust execution policies using offline MAPF plans as input, but assuming that agents might be delayed during the execution of timed schedules.
The proposed Minimum Communication Policies (MCPs) make agents preserve two types of temporal dependencies: internal events within one agent and order relation of visiting a node.
Regardless of delays, MCPs make all agents reach their destinations without conflicts.
We later use MCPs for robot demos as a comparison of Online TSWAP.

\citet{okumura2021time} studied \emph{time-independent planning} to execute MAPF by modeling the whole system as a transition system that changes configurations according to atomic actions of agents.
The online problem defined in this paper can be regarded as an unlabeled version of their model for conventional MAPF.

\section{Problem Definition and Terminologies}
\label{sec:problem-definition}

\paragraph{Unlabeled-MAPF Instance}
A problem instance of \emph{unlabeled-MAPF} is defined by a connected undirected graph $G = (V, E)$, a set of agents $A = \{a_1, \ldots, a_n\}$, a set of distinct initial locations $\mathcal{S} = \{s_1, \ldots, s_n \}$ and distinct target locations $\mathcal{T} = \{g_1, \ldots, g_m\}$, where $|\mathcal{T}| \leq |A|$.

\paragraph{Offline Problem}
\label{subsec:problem-offline}
Given an unlabeled-MAPF instance,
let $\loc{i}{t} \in V$ denote the location of an agent $a_i$ at discrete time~$t \in \mathbb{N}$.
At each timestep~$t$, $a_i$ can move to an adjacent node, or can stay at its current location, i.e., $\loc{i}{t+1} \in \neigh{\loc{i}{t}} \cup \{ \loc{i}{t} \}$, where \neigh{v} is the set of nodes adjacent to $v \in V$.
Agents must avoid two types of conflicts:
(1)~\emph{vertex conflict}: $\loc{i}{t} \neq \loc{j}{t}$, and,
(2)~\emph{swap conflict}: $\loc{i}{t} \neq \loc{j}{t+1} \lor \loc{i}{t+1} \neq \loc{j}{t}$.
A \emph{solution} is a set of paths $\{ \path{1}, \ldots, \path{n} \}$ such that a subset of agents occupies all targets at a certain timestep $T$.
More precisely, assign a path $\path{i} = (\loc{i}{0}, \loc{i}{1}, \dots, \loc{i}{T})$ to each agent such that $\loc{i}{0} = s_i$ and there exists an agent $a_j$ with $\loc{j}{T} = g_k$ for all $g_k \in \mathcal{T}$.

We consider four metrics to rate solutions:
\begin{itemize}
  \item \emph{makespan}:
  the first timestep when all targets are occupied, i.e., $T$.
  \item \emph{sum-of-costs}: $\sum_{i}T_i$ where $T_i$ is the minimum timestep such that $\loc{i}{T_i}=\loc{i}{T_i+1}=\ldots=\loc{i}{T}$.
  \item \emph{maximum-moves}: the maximum of how many times each agent moves to adjacent nodes.
  \item \emph{sum-of-moves}: the summation of moves of each agent.
\end{itemize}

\paragraph{Online Time-Independent Problem}
\label{subsec:problem-online}
An \emph{execution schedule} is defined by infinite sequence $\exec = (a_i, a_j, a_k, \ldots)$ defining the order in which each agent is \emph{activated} and can move one step.

Given an unlabeled-MAPF instance, a situation where all agents are at their initial locations, and an execution schedule \exec, an agent~$a_i$ can move to an adjacent node if (1)~it is $a_i$'s turn in \exec and (2)~the node is unoccupied by others.
\exec is called \emph{fair} when all agents appear infinitely often in \exec.
\emph{Termination} is a configuration where all targets are occupied by a subset of agents simultaneously.
An algorithm is called \emph{complete} when termination is achieved within a finite number of activations for any fair execution schedules.

Given an execution schedule, we rate the efficiency of agents' behaviors according to two metrics: \emph{maximum-moves} and \emph{sum-of-moves}.
Their definitions are the same as for the offline problem.

\paragraph{Remarks for Online Problem}
Since any complete algorithms must deal with any fair schedules, they inherently assume timing uncertainties.
For simplicity, we assume that at most one agent is activated at any time, hence the execution is determined by a sequence over the agents.
There is no loss of generality as long as an agent can atomically reserve its next node before each move.
Note that we do not formally define sum-of-costs and makespan for the online problem since they should be measured according to actual time.

\paragraph{Other Assumptions and Notations}
For simplicity, we assume $|\mathcal{T}| = |A|$ unless explicitly mentioned.
We denote the diameter of $G$ by $\diam(G)$, and its maximum degree by $\Delta(G)$.
Let \dist{u}{v} denote the shortest path length from $u \in V$ to $v \in V$.
We assume the existence of admissible heuristics $h(u, v)$ for computing the shortest path length in constant time, i.e., $h(u, v) \leq \dist{u}{v}$, e.g., the Manhattan distance.
This paper uses a simplified notation of the asymptotic complexity like $O(V)$ rather than $O(|V|)$.

\section{Offline TSWAP}
\label{sec:offline-tswap}
This section presents \emph{Offline TSWAP}, a sub-optimal path planning algorithm for the offline problem, which is complete for any initial target assignments.
Here, an \emph{assignment} is a set of pairs $s \in \mathcal{S}$ and $g \in \mathcal{T}$ such that all agents have a distinct target.
Most proofs are deferred to the Appendix.

\subsection{Algorithm Description}
\label{subsec:path-planning}
\begin{algorithm}[tb]
  \caption{\textbf{Offline TSWAP}}
  \label{algo:offline}
  {\small
  \begin{algorithmic}[1]
    \item[\textbf{input}:~unlabeled-MAPF instance]
    \item[\textbf{output}:~plan $\paths$]
      \STATE get an initial assignment $\mathcal{M}$: a set of pairs $s \in \mathcal{S}$ and $g \in \mathcal{T}$
      \label{algo:offline:assign}
    \STATE $a_i.v, a_i.g \leftarrow (s_i, g) \in \mathcal{M}$~:~for each agent $a_i \in A$
    \label{algo:offline:init}
    \STATE $t \leftarrow 0$ \COMMENT{timestep}
    \WHILE{$\exists a \in A, a.v \neq a.g$}
    \label{algo:offline:loop-start}
    \FOR{$a \in A$}
    \label{algo:offline:start-for}
    \IFSINGLE{$a.v = a.g$}{\textbf{continue}}
    \label{algo:offline:at-goal}
    \STATE $u \leftarrow \nextnode{a.v}{a.g}$
    \label{algo:offline:next}
    \IF{$\exists b \in A~\text{s.t.}~b.v = u$}
    \label{algo:offline:if-occupied}
    \IF{$u = b.g$}
    \STATE swap targets of $a$, $b$; $a.g \leftarrow b.g$, $b.g \leftarrow a.g$
    \label{algo:offline:swap}
    \ELSIF{detect deadlock for $A^\prime\subseteq A \land a \in A^\prime$}
    \label{algo:offline:deadlock-detection}
    \STATE rotate targets of $A^\prime$
    \label{algo:offline:rotation}
    \ENDIF
    \ELSE
    \label{algo:offline:if-occupied-end}
    \STATE $a.v \leftarrow u$
    \label{algo:offline:move}
    \ENDIF
    \label{algo:offline:end-main}
    \ENDFOR
    \label{algo:offline:end-for}
    \STATE $t \leftarrow t + 1$
    \STATE $\loc{i}{t} \leftarrow a_i.v$~:~for each agent $a_i \in A$
    \ENDWHILE
    \label{algo:offline:loop-end}
  \end{algorithmic}
  }
\end{algorithm}
TSWAP assumes that an initial assignment is given externally, then mainly determines how to go but not only.
This is because the initial assignment is potentially an unsolvable MAPF instance (e.g., see $t=0$ at path planning in Fig.~\ref{fig:tswap});
we cannot apply MAPF solvers directly to design a complete unlabeled-MAPF algorithm for arbitrary assignments.
The algorithm must consider swapping targets as necessary.

Algorithm~\ref{algo:offline} generates a solution \paths by moving agents incrementally towards their targets following the shortest paths, using the following function;
\begin{align*}
  \nextnode{u}{w} \defeq \argmin_{v \in \neigh{u} \cup \{ u \} }\dist{v}{w}
\end{align*}
$\mathsf{nextNode}$ is assumed to be deterministic; tie-break between nodes having the same scores is done deterministically.

Each agent~$a$ has two variables: $a.v$ is the current location, and $a.g$ is the current target.
They are initialized by the initial assignment [Lines~\ref{algo:offline:assign}--\ref{algo:offline:init}].
After that and until all agents reach their targets, one-timestep planning is repeated as follows [Lines~\ref{algo:offline:loop-start}--\ref{algo:offline:loop-end}].
If $a$ is on its target $a.g$, it stays there ($a.v=a.g$).
Otherwise, $a$ attempts a move to the nearest neighbor of $a.v$ towards $a.g$, call it $u$ [Line~\ref{algo:offline:next}].
When $u$ is occupied by another agent, the algorithm either performs target swapping or deadlock resolution~[Lines~\ref{algo:offline:if-occupied}--\ref{algo:offline:if-occupied-end}].
Here, a deadlock is defined as follows.
A set of agents $A^\prime=(a_{i1}, a_{i2}, a_{i3}, \ldots, a_{ij})$ is in a \emph{deadlock} when $\nextnode{a_{i1}.v}{a_{i1}.g} = a_{i2}.v \land \nextnode{a_{i2}.v}{a_{i2}.g} = a_{i3}.v \land \ldots \land \nextnode{a_{ij}.v}{a_{ij}.g} = a_{i1}.v$.
When detecting a deadlock for $A^\prime$, the algorithm ``rotates'' targets;
$a_{i1}.g \leftarrow a_{ij}.g, a_{i2}.g \leftarrow a_{i1}.g, a_{i3}.g \leftarrow a_{i2}.g, \ldots$ [Line~\ref{algo:offline:rotation}].
The detection incrementally checks whether the next location of each agent is occupied by another agent and concurrently checks the existence of a loop.

Figure~\ref{fig:tswap} shows an example of TSWAP, together with target assignment by Alg.~\ref{algo:target-assignment}$^\dagger$ introduced in Sec.~\ref{sec:target-assignment}.

\begin{figure}[t]
  \centering
  \begin{tikzpicture}
    \newcommand{\betweenvertex}{0.7 cm}
    \scriptsize
    %
    {
      \node[vertex,label=below:{$g_1$}](v1) at (0, 0) {};
      \node[vertex,right=\betweenvertex of v1.center](v2) {};
      \node[vertex,right=\betweenvertex of v2.center,label=above:{$s_1$}](v3) {};
      \node[vertex,right=\betweenvertex of v3.center,,label=above:{$s_2$}](v4) {};
      \node[vertex,right=\betweenvertex of v4.center,label=below:{$g_2$},label=above:{$s_3$}](v5) {};
      \node[vertex,right=\betweenvertex of v5.center,label=below:{$g_3$}](v6) {};
      \foreach \u / \v in {v1/v2, v2/v3, v3/v4, v4/v5, v5/v6} \draw[edge] (\u) -- (\v);
    }
    %
    {
      \node[below=0.3cm of v3] () {Target Assignment (by Alg.~\ref{algo:target-assignment}$^\dagger$)};
      \node[below=3.5cm of v3] () {Path Planning (by Alg.~\ref{algo:offline})};
    }
    %
    {
      \newcommand{\betweenvertexm}{0.25 cm}
      \newcommand{\betweenmatching}{1.2 cm}
      \node[vertex-small,label=above:{\scriptsize $s_1$}](s1_1) at (-0.8, -1.5) {};
      \node[vertex-small,right=\betweenvertexm of s1_1.center,label=above:{\scriptsize $s_2$}](s2_1) {};
      \node[vertex-small,right=\betweenvertexm of s2_1.center,label=above:{\scriptsize $s_3$}](s3_1) {};
      \node[vertex-small,below=\betweenvertexm of s1_1.center,label=below:{\scriptsize $g_1$}](g1_1) {};
      \node[vertex-small,below=\betweenvertexm of s2_1.center,label=below:{\scriptsize $g_2$}](g2_1) {};
      \node[vertex-small,below=\betweenvertexm of s3_1.center,label=below:{\scriptsize $g_3$}](g3_1) {};
      \foreach \u / \v in {s3_1/g2_1} \draw[matched] (\u) -- (\v);
      \node[matching-box,above=0.39cm of s2_1.center,anchor=north](box1) {};
      \node[vertex-small,right=\betweenmatching of s1_1](s1_2) {};
      \node[vertex-small,right=\betweenvertexm of s1_2.center](s2_2) {};
      \node[vertex-small,right=\betweenvertexm of s2_2.center](s3_2) {};
      \node[vertex-small,below=\betweenvertexm of s1_2.center](g1_2) {};
      \node[vertex-small,below=\betweenvertexm of s2_2.center](g2_2) {};
      \node[vertex-small,below=\betweenvertexm of s3_2.center](g3_2) {};
      \foreach \u / \v in {s2_2/g2_2} \draw[edge] (\u) -- (\v);
      \foreach \u / \v in {s3_2/g2_2} \draw[matched] (\u) -- (\v);
      \node[matching-box,above=0.39cm of s2_2.center,anchor=north](box2) {};
      \draw[->,color={rgb:red,0.1;green,0.1;blue,0.1}](box1) -- (box2);
      \node[vertex-small,below=\betweenmatching of s1_1](s1_3) {};
      \node[vertex-small,right=\betweenvertexm of s1_3.center](s2_3) {};
      \node[vertex-small,right=\betweenvertexm of s2_3.center](s3_3) {};
      \node[vertex-small,below=\betweenvertexm of s1_3.center](g1_3) {};
      \node[vertex-small,below=\betweenvertexm of s2_3.center](g2_3) {};
      \node[vertex-small,below=\betweenvertexm of s3_3.center](g3_3) {};
      \foreach \u / \v in {s3_3/g2_3} \draw[edge] (\u) -- (\v);
      \foreach \u / \v in {s2_3/g2_3,s3_3/g3_3} \draw[matched] (\u) -- (\v);
      \node[matching-box,above=0.39cm of s2_3.center,anchor=north](box3) {};
      \draw[->,color={rgb:red,0.1;green,0.1;blue,0.1}](box2) -- (box3);
      \node[vertex-small,right=\betweenmatching of s1_3](s1_4) {};
      \node[vertex-small,right=\betweenvertexm of s1_4.center](s2_4) {};
      \node[vertex-small,right=\betweenvertexm of s2_4.center](s3_4) {};
      \node[vertex-small,below=\betweenvertexm of s1_4.center](g1_4) {};
      \node[vertex-small,below=\betweenvertexm of s2_4.center](g2_4) {};
      \node[vertex-small,below=\betweenvertexm of s3_4.center](g3_4) {};
      \foreach \u / \v in {s3_4/g2_4} \draw[edge] (\u) -- (\v);
      \foreach \u / \v in {s2_4/g2_4,s3_4/g3_4,s1_4/g1_4} \draw[matched] (\u) -- (\v);
      \node[matching-box,above=0.39cm of s2_4.center,anchor=north](box4) {};
      \draw[->,color={rgb:red,0.1;green,0.1;blue,0.1}](box3) -- (box4);
    }
    %
    {
      \newcommand{\betweenvertexm}{1.2 cm}
      \node[vertex-small,label=above:{\scriptsize $s_1$}](s1) at (2.5, -1.7) {};
      \node[vertex-small,right=\betweenvertexm of s1.center,label=above:{\scriptsize $s_2$}](s2) {};
      \node[vertex-small,right=\betweenvertexm of s2.center,label=above:{\scriptsize $s_3$}](s3) {};
      \node[vertex-small,below=\betweenvertexm of s1.center,label=below:{$g_1$}](g1) {};
      \node[vertex-small,below=\betweenvertexm of s2.center,label=below:{$g_2$}](g2) {};
      \node[vertex-small,below=\betweenvertexm of s3.center,label=below:{$g_3$}](g3) {};
      \foreach \u / \v in {s1/g2,s2/g2,s3/g3} \draw[edge] (\u) -- (\v);
      \foreach \u / \v in {s1/g1,s2/g3,s3/g2} \draw[matched] (\u) -- (\v);
      \node[matching-box,above=0.5cm of s2.center,anchor=north,minimum height=2.2cm,minimum width=3cm](box-last) {};
      \draw[->,color={rgb:red,0.1;green,0.1;blue,0.1}](box4) -- (box-last.west);
      \node[cost,below=0.3cm of s1](s1g1) {\tiny $2$};
      \node[cost,below right=0.475cm of s1](s1g2) {\tiny $2$};
      \node[cost,below=0.3cm of s2](s1g1) {\tiny $1$};
      \node[cost,below=0.3cm of s3](s3g3) {\tiny $1$};
      \node[cost,below right=0.18cm of s2](s2g3) {\tiny $2$};
      \node[cost,below left=0.18cm of s3](s3g2) {\tiny $0$};
    }
    {
      \node[vertex](v1) at (0, -4.6) {};
      \node[vertex,right=\betweenvertex of v1.center](v2) {};
      \node[vertex,right=\betweenvertex of v2.center](v3) {$a_1$};
      \node[vertex,right=\betweenvertex of v3.center](v4) {$a_2$};
      \node[vertex,right=\betweenvertex of v4.center](v5) {$a_3$};
      \node[vertex,right=\betweenvertex of v5.center](v6) {};
      \foreach \u / \v in {v1/v2, v2/v3, v3/v4, v4/v5, v5/v6} \draw[edge] (\u) -- (\v);
      \node[left=0.2cm of v1]() {$t=0$};
      \draw[->] (v3) to[out=135,in=45] (v1);
      \draw[->] (v4) to[out=45,in=135] (v6);
      \coordinate[above=0.2cm of v5.north](tmp1);
      \draw[-] (v5.north) to[out=150,in=180] (tmp1);
      \draw[->] (tmp1) to[out=0,in=60] (v5.north);
    }
    {
      \node[vertex,below=0.7cm of v1.center](v1) {};
      \node[vertex,right=\betweenvertex of v1.center](v2) {$a_1$};
      \node[vertex,right=\betweenvertex of v2.center](v3) {};
      \node[vertex,right=\betweenvertex of v3.center](v4) {$a_2$};
      \node[vertex,right=\betweenvertex of v4.center](v5) {$a_3$};
      \node[vertex,right=\betweenvertex of v5.center](v6) {};
      \foreach \u / \v in {v1/v2, v2/v3, v3/v4, v4/v5, v5/v6} \draw[edge] (\u) -- (\v);
      \node[left=0.2cm of v1]() {$t=1$};
      \draw[->] (v2) to[out=135,in=45] (v1);
      \draw[->] (v4) to[out=45,in=135] (v5);
      \draw[->] (v5) to[out=45,in=135] (v6);
    }
    {
      \node[vertex,below=0.7cm of v1.center](v1) {$a_1$};
      \node[vertex,right=\betweenvertex of v1.center](v2) {};
      \node[vertex,right=\betweenvertex of v2.center](v3) {};
      \node[vertex,right=\betweenvertex of v3.center](v4) {};
      \node[vertex,right=\betweenvertex of v4.center](v5) {$a_2$};
      \node[vertex,right=\betweenvertex of v5.center](v6) {$a_3$};
      \foreach \u / \v in {v1/v2, v2/v3, v3/v4, v4/v5, v5/v6} \draw[edge] (\u) -- (\v);
      \node[left=0.2cm of v1]() {$t=2$};
      \coordinate[above=0.2cm of v5.north](tmp1);
      \draw[-] (v5.north) to[out=150,in=180] (tmp1);
      \draw[->] (tmp1) to[out=0,in=60] (v5.north);
      \coordinate[above=0.2cm of v6.north](tmp2);
      \draw[-] (v6.north) to[out=150,in=180] (tmp2);
      \draw[->] (tmp2) to[out=0,in=60] (v6.north);
      \coordinate[above=0.2cm of v1.north](tmp3);
      \draw[-] (v1.north) to[out=150,in=180] (tmp3);
      \draw[->] (tmp3) to[out=0,in=60] (v1.north);
    }
  \end{tikzpicture}
  \caption{\textbf{Example of Offline TSWAP with Alg.~\ref{algo:target-assignment}$^\dagger$ as an assignment algorithm.}
    An unlabeled-MAPF instance is shown at the top.
    The target assignment is illustrated in the middle, using a bipartite graph $\mathcal{B}$.
    The assignment $\mathcal{M}$ is denoted by red lines.
    The left corresponds to finding the bottleneck cost [Lines~\ref{algo:ta:setup}--\ref{algo:ta:end-bap}], i.e., incrementally adding pairs of an initial location and a target then updating the matching.
    The right part, with annotations of costs, corresponds to solving the minimum cost maximum matching problem [Line~\ref{algo:ta:mincost-matching}].
    Two edges are added from the last situation due to line~\ref{algo:ta:additional-edge}.
    Offline TSWAP (Alg.~\ref{algo:offline}) is illustrated at the bottom.
    $a_3$, $a_2$, and $a_1$ repeat one-step planning.
    Current locations of agents, i.e., $a.v$, are shown within nodes.
    Arrows represent the targets, i.e., $a.g$.
    A target swapping happens between $a_2$ and $a_3$ at $t=0$ [Line~\ref{algo:offline:swap}].
    Note that we artificially assign $s_2$ to $g_3$ to show the example of target swapping.
  }
  \label{fig:tswap}
\end{figure}
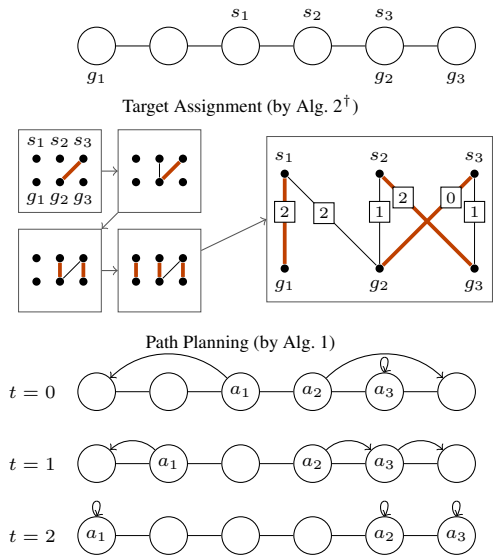

\subsection{Theoretical Analysis}
\label{subsec:offline:analysis}
\begin{theorem}
  Offline TSWAP (Algorithm~\ref{algo:offline}) is complete for the offline problem.
  \label{thrm:offline}
\end{theorem}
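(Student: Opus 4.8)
The plan is to separate \emph{correctness} from \emph{termination}. For correctness I would first record two invariants maintained throughout the main loop [Lines~\ref{algo:offline:loop-start}--\ref{algo:offline:loop-end}]. First, the set of current targets $\{a.g : a \in A\}$ always equals $\mathcal{T}$: this holds right after the initial assignment [Line~\ref{algo:offline:init}], and both a swap [Line~\ref{algo:offline:swap}] and a rotation [Line~\ref{algo:offline:rotation}] merely permute targets among agents, hence preserve the bijection between $A$ and $\mathcal{T}$. Consequently, once the loop exits (its guard forces $a.v = a.g$ for every $a$), every target of $\mathcal{T}$ is occupied, so the recorded paths form a solution with $\loc{i}{0}=s_i$. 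Second, the paths are collision-free: an agent is moved only into a currently unoccupied node [Line~\ref{algo:offline:move}], which excludes vertex conflicts directly; a swap conflict would require two agents to exchange nodes in one timestep, but because agents are processed sequentially and each moves only into a free cell, the second agent of such a pair would still see the first agent's cell occupied and could not complete the exchange. Swaps and rotations move no agent, so they introduce no conflict.

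The core of the argument is termination, for which I would use the potential $\Phi \defeq \sum_{a \in A} \dist{a.v}{a.g}$. Since $a.v \neq a.g$ whenever a move is attempted, $u = \nextnode{a.v}{a.g}$ is a genuine neighbor with $\dist{u}{a.g} = \dist{a.v}{a.g} - 1$, so each move [Line~\ref{algo:offline:move}] decreases $\Phi$ by exactly $1$. A swap leaves $\Phi$ unchanged: writing $d=\dist{a.v}{a.g}$ for the requester $a$, after the exchange $a$ acquires the blocking agent's goal at distance $1$ while the blocking agent, formerly at distance $0$, takes over $a$'s old target at distance $d-1$, so the contribution $d+0$ becomes $1+(d-1)$. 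A rotation over a deadlocked cycle $A'=(a_{i1},\dots,a_{ij})$ with $j \ge 2$ decreases $\Phi$ by exactly $j$, because the deadlock relations $\nextnode{a_{ik}.v}{a_{ik}.g} = a_{i(k+1)}.v$ give $\dist{a_{i(k+1)}.v}{a_{ik}.g} = \dist{a_{ik}.v}{a_{ik}.g} - 1$ for each edge of the cycle, and the rotation assigns $a_{ik}.g$ to the agent at $a_{i(k+1)}.v$. As $\Phi \ge 0$ is integer-valued, the number of moves plus twice the number of rotations is bounded by the initial $\Phi$, so \emph{only finitely many moves and rotations occur over the entire run}.

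The hard part is that swaps do not decrease $\Phi$, so the bound above does not by itself rule out an infinite run; I would close this gap with a secondary counting argument. First, every non-terminal timestep performs at least one move, swap, or rotation: otherwise all positions and targets would be unchanged throughout the pass, and following the chain of blocked non-goal agents (each pointing to the occupant of its desired neighbor) would have to reach either an agent resting on its goal---forcing a swap [Line~\ref{algo:offline:swap}]---or a cycle of non-goal agents---forcing a deadlock rotation [Line~\ref{algo:offline:rotation}], a contradiction. Now let $t^\star$ be the last timestep at which a move or rotation occurs; it is finite by the previous paragraph. For $t > t^\star$ no agent moves, so all positions $a.v$ are frozen and only swaps happen; each such swap strictly decreases $N \defeq |\{a \in A : a.v = a.g\}|$, since the blocking agent leaves its goal while the non-moving requester reaches none, and with positions frozen no agent can re-enter a goal. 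As $N \ge 0$, only finitely many swaps---hence finitely many timesteps---follow $t^\star$, and combined with the finitely many timesteps up to $t^\star$ the loop halts. Together with the invariants of the first paragraph, this establishes completeness.
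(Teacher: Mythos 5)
Your proof is correct, but it reaches termination by a genuinely different route than the paper. The paper uses a single augmented potential $\phi$ that adds to the distance sum a second term counting, for each agent, how many \emph{other agents' targets} lie on the interior of its $\mathsf{nextNode}$-induced shortest path; this second term is what makes a swap (which leaves the distance sum unchanged, as you correctly compute) strictly decrease $\phi$, so one monotonicity argument handles moves, swaps, and rotations uniformly. You instead keep only the distance potential $\Phi$, concede that swaps are $\Phi$-neutral, and close the gap with a second phase: after the last move/rotation the positions freeze, and each remaining swap strictly decreases the number $N$ of agents sitting on their targets, so only finitely many swaps can follow. Both arguments are sound. What yours buys is avoiding the delicate bookkeeping behind the paper's second term (which implicitly relies on the target multiset being invariant under swaps/rotations and on the consistency of the $\mathsf{nextNode}$ paths $\Pi$); you also supply the collision-freeness and target-coverage invariants that the paper leaves implicit, and your lemma that every non-terminal pass performs at least one move, swap, or rotation makes explicit a step the paper's contradiction argument uses silently. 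What the paper's single potential buys is an immediate quantitative bound: $\phi = O(A\cdot\diam(G))$ directly yields the makespan bound of Proposition~\ref{prop:offline:upper-bound}, whereas your two-phase argument establishes termination but would need an extra accounting of swap-only timesteps (e.g., bounding the total number of swaps by the initial $N$ plus the number of times a move or rotation can place an agent on its target) to recover the same makespan bound.
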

\begin{proof}
  Let $\Pi(u, u^\prime) \subset V$ be a set of nodes in a shortest path from $u \in V$ to $u^\prime \in V$, identified by $\mathsf{nextNode}$, except for $u$ and $u^\prime$.
  Consider the following potential function in Alg.~\ref{algo:offline}.
  \begin{align*}
    \phi = \sum_{a \in A}\Bigl\{
    \dist{a.v}{a.g}
    +\bigl|\{ b \;|\; b \in A, b.g \in \Pi(a.v, a.g)\}\bigr|
    \Bigr\}
  \end{align*}
  Observe that $\phi=0$ means that the problem is solved.
  Furthermore, for each iteration of Lines~\ref{algo:offline:loop-start}--\ref{algo:offline:loop-end}, $\phi$ is non-increasing.
  We now proof that $\phi$ decreases for each iteration only when $\phi > 0$ by contradiction.

  Suppose that $\phi (\neq 0)$ does not differ from the last iteration.
  Since $\phi \neq 0$, there are agents not on their targets. Let them be $B \subseteq A$.
  First, there is no swap operation by Line~\ref{algo:offline:swap}; otherwise, the second term of $\phi$ must decrease.
  Second, all agents in $B$ do not move; otherwise, the first term of $\phi$ must decrease.
  Furthermore, for an agent $a \in B$, \nextnode{a.v}{a.g}, let denote this as $a.u$, must be occupied by another agent $b \in B$; otherwise, $a$ moves to $a.u$.
  This is the same for $b$, i.e., there is an agent $c \in B$ such that $c.v = b.u$.
  By induction, this sequence of agents must form a deadlock somewhere;
  however, by deadlock detection and resolution in Lines~\ref{algo:offline:deadlock-detection}--\ref{algo:offline:rotation}, both the first and second terms of $\phi$ must decrease.
  Hence, this is a contradiction.
\end{proof}

Any assignment algorithms can be applied to TSWAP because Theorem~\ref{thrm:offline} does not rely on initial assignments.
Furthermore, TSWAP can easily adapt to unlabeled-MAPF instances with $|A| > |\mathcal{T}|$ by assigning agents without targets to any non-target locations.

\begin{proposition}
  Offline TSWAP has upper bounds of;
  \begin{itemize}
  \item makespan: $O(A\cdot\diam(G))$
  \item sum-of-costs: $O(A^2\cdot\diam(G))$
  \item maximum-moves: $O(A\cdot\diam(G))$
  \item sum-of-moves: $O(A\cdot\diam(G))$
  \end{itemize}
  \label{prop:offline:upper-bound}
\end{proposition}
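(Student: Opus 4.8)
The plan is to make the potential function $\phi$ from the completeness proof (Theorem~\ref{thrm:offline}) do all the work, since every one of the four bounds can be read off from how $\phi$ starts and how it is consumed. First I would bound the initial value $\phi_0$. The first term $\sum_{a}\dist{a.v}{a.g}$ is at most $|A|\cdot\diam(G)$, because each distance is at most $\diam(G)$. For the second term, each interior set $\Pi(a.v,a.g)$ has at most $\diam(G)-1$ nodes, and since target locations are distinct at most one target sits on any node, so each agent contributes at most $\diam(G)-1$; summing gives at most $|A|\cdot(\diam(G)-1)$. Hence $\phi_0 = O(A\cdot\diam(G))$.

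The makespan and maximum-moves bounds are then immediate. Theorem~\ref{thrm:offline} already establishes that $\phi$ drops by at least $1$ on every pass of the while-loop while $\phi>0$, and that the loop halts exactly when $\phi=0$; therefore the number of timesteps $T$ is at most $\phi_0 = O(A\cdot\diam(G))$, which is the makespan. Since each agent executes at most one move (Line~\ref{algo:offline:move}) per timestep, its move count is at most $T$, giving maximum-moves $=O(A\cdot\diam(G))$. For sum-of-costs I would simply note $T_i \leq T$ for every agent, so $\sum_i T_i \leq |A|\cdot T = O(A^2\cdot\diam(G))$.

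The interesting bound is sum-of-moves, since the claim $O(A\cdot\diam(G))$ is strictly stronger than the trivial $|A|\cdot(\text{maximum-moves}) = O(A^2\cdot\diam(G))$, so the counting cannot be done agent-by-agent and must again go through $\phi$. The plan is to track $\phi$ after each atomic operation within a timestep and prove two facts: (i) every move decreases $\phi$ by at least $1$, and (ii) no swap or rotation increases $\phi$. For (i), a move of $a$ to $\nextnode{a.v}{a.g}$ lowers $\dist{a.v}{a.g}$ by exactly $1$, can only shrink $a$'s own $\Pi$-set, and changes no target location at all, so no other agent's contribution and no membership of $a.g$ in another agent's path is affected; thus $\phi$ falls by at least $1$. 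Granting (i) and (ii), the number of executions of Line~\ref{algo:offline:move} over the whole run is bounded by the total decrease of $\phi$, which is $\phi_0 = O(A\cdot\diam(G))$.

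The hard part will be establishing (ii). Because swaps and rotations relabel targets rather than positions, they move target locations and can therefore change the second term of $\phi$ for many agents simultaneously, so a purely local argument does not suffice. I would recycle the bookkeeping already sketched inside the completeness proof — that a swap strictly decreases the second term while leaving the total of the first term unchanged, and that a rotation decreases both terms — but state it as an unconditional property of the operations rather than only within the contradiction hypothesis. Concretely I need to track which targets enter and leave which shortest-path interiors under the relabeling and verify the net change is non-positive; once this global accounting is in place, every other step reduces to the per-timestep monotonicity already provided by Theorem~\ref{thrm:offline}.
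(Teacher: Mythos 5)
Your derivations of the makespan, sum-of-costs, and maximum-moves bounds follow the paper's proof essentially verbatim (the paper gets maximum-moves from the sum-of-moves bound rather than from the makespan, but both are valid). The divergence is in sum-of-moves, where the paper does something simpler than what you propose: it introduces a \emph{second} potential $\psi \defeq \sum_{a\in A}\dist{a.v}{a.g}$, observes that $\psi$ is non-increasing under swaps and rotations and decreases by exactly one at each execution of Line~\ref{algo:offline:move}, and concludes that the total number of moves is at most $\psi_0 = O(A\cdot\diam(G))$. By dropping the $\Pi$-term entirely, the paper never has to reason about how target relabelings interact with shortest-path interiors. Your route --- charging moves against $\phi$ itself --- can be made to work, but you leave the decisive step (your claim (ii), that swaps and rotations do not increase $\phi$) as a plan rather than a proof, and that is the only nontrivial part of the whole proposition. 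Two remarks on closing it: first, the global accounting you fear is largely illusory, because the second term of $\phi$ for an agent $c$ equals $|\mathcal{T}\cap\Pi(c.v,c.g)|$ (targets are distinct), and swaps and rotations only permute the assignment without moving any target location, so the term is unchanged for every agent whose own target is not relabeled; only the participating agents need a local computation, which shows both operations in fact strictly decrease $\phi$. Second, even granting that, the cleaner fix is simply to adopt the paper's $\psi$: it gives the same $O(A\cdot\diam(G))$ bound with a two-line argument. As submitted, your sum-of-moves bound is not yet established.
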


Compared to sum-of-costs, the upper bound on sum-of-moves is significantly reduced because it ignores all ``wait'' actions.
The bound on sum-of-moves is tight in some scenarios, such as a line graph with all agents starting on one end and all targets on the opposite end.
In general, the upper bound on makespan is greatly overestimated.
We see empirically later that TSWAP yields near-optimal solutions for makespan depending on initial assignments.

\begin{proposition}
  Assume that the time complexity of $\mathsf{nextNode}$ and the deadlock resolution [Lines~\ref{algo:offline:deadlock-detection}--\ref{algo:offline:rotation}] in Alg.~\ref{algo:offline} are $\alpha$ and $\beta$, respectively.
  The time complexity of Offline TSWAP excluding Line~\ref{algo:offline:assign} is $O(A^2\cdot\diam(G)\cdot(\alpha\!+\!\beta))$.
  \label{prop:complexity-all}
\end{proposition}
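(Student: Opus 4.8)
The plan is to factor the running time (excluding Line~\ref{algo:offline:assign}) as the product of two quantities, namely the number of iterations of the outer \textbf{while} loop [Lines~\ref{algo:offline:loop-start}--\ref{algo:offline:loop-end}] and the worst-case cost of a single such iteration, and then to bound each factor separately.

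First I would bound the number of \textbf{while}-loop iterations. Each iteration advances the timestep counter $t$ by exactly one, and the loop terminates precisely when every agent occupies its target; hence the iteration count equals the makespan of the produced plan. By Proposition~\ref{prop:offline:upper-bound} this is $O(A \cdot \diam(G))$, which supplies the first factor.

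Next I would bound the cost of one \textbf{while}-loop iteration. Its body is a \textbf{for} loop [Lines~\ref{algo:offline:start-for}--\ref{algo:offline:end-for}] ranging over all $|A|$ agents, so it suffices to show that each agent is processed in $O(\alpha + \beta)$ time. For a fixed agent $a$ the work consists of: the goal test on Line~\ref{algo:offline:at-goal} ($O(1)$); one evaluation of $\nextnode{a.v}{a.g}$ on Line~\ref{algo:offline:next} (cost $\alpha$ by assumption); the occupancy test on Line~\ref{algo:offline:if-occupied}; and exactly one of the three mutually exclusive branches, i.e., a target swap (Line~\ref{algo:offline:swap}, $O(1)$), a deadlock detection and rotation (Lines~\ref{algo:offline:deadlock-detection}--\ref{algo:offline:rotation}, cost $\beta$ by assumption), or a move (Line~\ref{algo:offline:move}, $O(1)$). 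Summing these yields $O(\alpha + \beta)$ per agent and therefore $O(A \cdot (\alpha + \beta))$ per \textbf{while} iteration.

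The step requiring the most care is the occupancy test ``$\exists b \in A~\text{s.t.}~b.v = u$'' on Line~\ref{algo:offline:if-occupied}, which a naive scan would answer in $\Theta(A)$ time and thereby inflate the per-agent cost. I would eliminate this by maintaining an auxiliary table mapping each vertex to the agent (if any) currently located there, so that every occupancy test is answered in $O(1)$. This table stays consistent because agent locations $a.v$ are modified only on Line~\ref{algo:offline:move}: both the swap (Line~\ref{algo:offline:swap}) and the rotation (Line~\ref{algo:offline:rotation}) change only the targets $a.g$, never a location. Hence each move incurs an $O(1)$ table update that is already absorbed into the $O(\alpha + \beta)$ per-agent bound, while the one-time $O(V)$ initialization is dominated by the total. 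Multiplying the two factors then gives $O(A \cdot \diam(G)) \cdot O(A \cdot (\alpha + \beta)) = O(A^2 \cdot \diam(G) \cdot (\alpha + \beta))$, as claimed.
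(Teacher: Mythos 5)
Your proof is correct and follows essentially the same route as the paper's: bound the number of \textbf{while}-loop iterations by the makespan bound $O(A\cdot\diam(G))$ from Proposition~\ref{prop:offline:upper-bound}, and bound each iteration by $O(A\cdot(\alpha+\beta))$ since the \textbf{for} loop body runs $|A|$ times with only $\mathsf{nextNode}$ and the deadlock resolution being non-constant. Your extra care about implementing the occupancy test on Line~\ref{algo:offline:if-occupied} in $O(1)$ via a vertex-to-agent table is a reasonable implementation detail that the paper's proof leaves implicit when it asserts the remaining operations are constant.
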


From Proposition~\ref{prop:complexity-all}, TSWAP has an advantage in large fields compared to the time complexity $O(AV^2)$ of the makespan-optimal algorithm~\cite{yu2013multi} with a natural assumption that $E=O(V)$.

\section{Target Assignment with Lazy Evaluation}
\label{sec:target-assignment}
Target assignment determines where to go; takes an unlabeled-MAPF instance as input, then returns an assignment, a set of pairs of an initial location and a target, as output.
An initial assignment is crucial for TSWAP.
Ideal assignment algorithms are quick, scalable, and with reasonable quality for solution metrics (e.g., makespan).
It is possible to apply conventional assignment algorithms~\cite{kuhn1955hungarian}.
However, costs (i.e., distances) for each start-target pair are unknown initially, which is typically computed via breadth-first search with time complexity $O(A(V+E))$.
This would be a non-negligible overhead.
We thus present two examples that efficiently solve target assignment with \emph{lazy evaluation}, which avoids exhaustive distance evaluation.

\subsection{Bottleneck Assignment}
\begin{algorithm}[tb]
  \caption{\textbf{Bottleneck Assignment}}
  \label{algo:target-assignment}
  {\small
  \begin{algorithmic}[1]
    \item[\textbf{input}:~unlabeled-MAPF instance]
    \item[\textbf{output}:~$\mathcal{M}$: assignment, a set of pairs $s \in \mathcal{S}$ and $g \in \mathcal{T}$]
    \STATE initialize $\mathcal{M}$; Let $\mathcal{B}$ be a bipartite graph $(\mathcal{S}, \mathcal{T}, \emptyset)$
    \label{algo:ta:bipartite}
    \STATE $\mathcal{Q}$~:~priority queue of tuple\\
    ~~~~~~$s \in \mathcal{S}$, $g \in \mathcal{T}$, real distance, and estimated distance\\
    ~~~~~~in increasing order of distance\\
    ~~~~~~(use real one if exists, otherwise use estimated one)
    \label{algo:ta:priority-queue}
    \vspace{0.1cm}
    \STATE $\mathcal{Q}.\text{push}\left(\left(s, g, \bot, h(s, g)\right)\right)$ : for each pair $s\in\mathcal{S}, g\in\mathcal{T}$
    \label{algo:ta:setup}
    \WHILE{$\mathcal{Q} \neq \emptyset$}
    \label{algo:ta:start-loop}
    \STATE $(s, g, d, \Delta) \leftarrow \mathcal{Q}.\text{pop()}$
    \IF{$d = \bot$}
    \STATE $\mathcal{Q}.\text{push}\left((s, g, \dist{s}{g}, \Delta)\right)$; \textbf{continue}
    \label{algo:ta:lazy-eval}
    \ENDIF
    \STATE add a new edge $(s, g)$ to $\mathcal{B}$
    \label{algo:ta:add-new-edge}
    \STATE update $\mathcal{M}$ by finding an augmenting path on $\mathcal{B}$
    \label{algo:ta:update-matching}
    \IF{$|\mathcal{M}| = |\mathcal{T}|$}
    \label{algo:ta:check-end}
    \STATE $^\dagger$optional: add all $(s^\prime, g^\prime, d^\prime, \cdot) \in \mathcal{Q}$ to $\mathcal{B}$ s.t. $d^\prime = d$
    \label{algo:ta:additional-edge}
    \STATE \textbf{break}
    \ENDIF
    \ENDWHILE
    \label{algo:ta:end-bap}
    \STATE $^\dagger$optional: $\mathcal{M} \leftarrow$ minimum cost maximum matching on $\mathcal{B}$
    \label{algo:ta:mincost-matching}
  \end{algorithmic}
  }
\end{algorithm}
Algorithm~\ref{algo:target-assignment} aims to minimize makespan by solving the bottleneck assignment problem~\cite{gross1959bottleneck}, i.e., assign each agent to one target while minimizing the maximum cost, regarding distances between initial locations and targets as costs.

The algorithm incrementally adds pairs of initial location and target to a bipartite graph $\mathcal{B}$ [Line~\ref{algo:ta:add-new-edge}], in increasing order of their distances using a priority queue $\mathcal{Q}$.
$\mathcal{B}$ is initialized as $(\mathcal{S}, \mathcal{T}, \emptyset)$ [Line~\ref{algo:ta:bipartite}].
This iteration continues until all targets are matched to initial locations, i.e., agents [Line~\ref{algo:ta:check-end}].
At each iteration, the maximum bipartite matching problem on $\mathcal{B}$ is solved [Line~\ref{algo:ta:update-matching}].
In general, the Hopcroft-Karp algorithm~\cite{hopcroft1973n} efficiently solves this problem in $O(\sqrt{V^\prime}E^\prime)$ runtime for any bipartite graph $(V^\prime,E^\prime)$, but we use the reduction to the maximum flow problem and the Ford-Fulkerson algorithm~\cite{ford1956maximal}.
The basic concept of this algorithm is finding repeatedly an \emph{augmenting path}, i.e., a path from source to sink with available capacity on all edges in the path, then making the flow along that path.
Such paths are found, e.g., via depth-first or breadth-first search.
Here, finding a single augmenting path in  $O(E^\prime)$ runtime is sufficient to update the matching because the number of matched pairs increases at most once for each adding.

The algorithm uses \emph{lazy evaluation} of real distance [Line~\ref{algo:ta:lazy-eval}].
We use the priority queue $\mathcal{Q}$ [Line~\ref{algo:ta:priority-queue}] and admissible heuristics $h$ [Line~\ref{algo:ta:setup}], then evaluate the real distance as needed.
$\bot$ denotes that the corresponding real distance has not been evaluated yet.
The lazy evaluation contributes to speedup of the target assignment, as we will see later.

The algorithm \emph{optionally} solves the minimum cost maximum matching problem [Line~\ref{algo:ta:mincost-matching}], aiming at improving the sum-of-costs metric.
The problem can be solved by reducing to the minimum cost maximum flow problem then using the successive shortest path algorithm~\cite{ahuja1993network}.
Note that when finding the bottleneck cost, all edges in $\mathcal{Q}$ with their costs equal to the bottleneck cost are added to $\mathcal{B}$ to improve the sum-of-costs metric of the assignment [Line~\ref{algo:ta:additional-edge}].
This operation includes lazy evaluation similar to the main loop [Line~\ref{algo:ta:start-loop}--\ref{algo:ta:end-bap}].
We denote the corresponding algorithm as Alg.~\ref{algo:target-assignment}$^\dagger$.

\begin{proposition}
  The time complexity of Algorithm~\ref{algo:target-assignment}$^{(\dagger)}$ is $O\left(\max\left(A(V\!+\!E), A^4\right)\right)$.
  \label{prop:complexity-assignment}
\end{proposition}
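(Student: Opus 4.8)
The plan is to decompose Algorithm~\ref{algo:target-assignment}$^{(\dagger)}$ line by line and charge each line's aggregate cost to one of the two terms $A(V+E)$ and $A^4$, then invoke $O(X+Y)=O(\max(X,Y))$. Recall the standing assumption $|\mathcal{T}|=|A|$ (and $|\mathcal{S}|=|A|$ since the starts are distinct), so the priority queue $\mathcal{Q}$ is seeded at Line~\ref{algo:ta:setup} with exactly $|\mathcal{S}|\cdot|\mathcal{T}|=O(A^2)$ tuples, each heuristic value $h(s,g)$ costing $O(1)$ by assumption. Because a tuple is re-pushed at most once, when its real distance is filled in at Line~\ref{algo:ta:lazy-eval}, the queue never exceeds $O(A^2)$ entries and the algorithm performs $O(A^2)$ queue operations overall, contributing $O(A^2\log A)$, which is absorbed by $A^4$.

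The first genuinely delicate point is the aggregate cost of the lazy distance evaluations at Line~\ref{algo:ta:lazy-eval}. Computing each $\dist{s}{g}$ by an independent breadth-first search would cost $O(A^2(V+E))$ and break the claimed bound. The key observation is that one breadth-first search from a source $s$ already returns the distances from $s$ to \emph{all} nodes, so I would cache these on first access: the first query from a given $s$ triggers a single BFS of cost $O(V+E)$, and every later query from the same $s$ is answered in $O(1)$. Since there are only $|\mathcal{S}|=A$ distinct sources, the total distance-evaluation cost over the entire run --- including the re-scans induced by the optional edge flooding at Line~\ref{algo:ta:additional-edge} --- is at most $O(A(V+E))$, which yields the first term.

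Next I would bound the incremental matching. Each iteration reaching Lines~\ref{algo:ta:add-new-edge}--\ref{algo:ta:update-matching} adds one edge to $\mathcal{B}$ and runs one augmenting-path search; as argued in the text, such a search costs $O(E')$ on the current edge set, hence $O(A^2)$ since $E'\leq|\mathcal{S}|\cdot|\mathcal{T}|=O(A^2)$. At most $O(A^2)$ edges are ever added, so this phase costs $O(A^2)\cdot O(A^2)=O(A^4)$, the second term. For the optional minimum-cost maximum-matching at Line~\ref{algo:ta:mincost-matching}, the successive-shortest-path method saturates a flow of value $A$ using exactly $A$ augmentations, each a shortest-path computation over a residual graph with $O(A)$ vertices and $O(A^2)$ edges; with Bellman-Ford this is $O(A^3)$ per augmentation, so $O(A^4)$ in total (and strictly cheaper with Dijkstra and potentials). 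Summing all contributions and dropping lower-order terms gives $O\left(\max\left(A(V+E),A^4\right)\right)$.

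The main obstacle is precisely the amortized distance argument: I must make explicit that lazy evaluation together with per-source caching collapses what appears to be $\Theta(A^2)$ independent shortest-path queries into $A$ breadth-first searches, and confirm that the optional flooding at Line~\ref{algo:ta:additional-edge} re-scans the queue but triggers no distance computation outside this cached budget. The remaining accounting of priority-queue and augmenting-path work against the $A^4$ term is routine.
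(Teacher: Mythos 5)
Your proof is correct and follows essentially the same decomposition as the paper's: $O(A(V+E))$ for the lazily evaluated distances via $|A|$ cached breadth-first searches, $O(A^2\lg A)$ for the priority-queue operations, $O(A^4)$ for the incremental augmenting-path updates (at most $A^2$ edge insertions, each followed by an $O(A^2)$ search), and a matching-within-budget cost for the optional minimum-cost maximum matching. The only cosmetic difference is that you bound the successive-shortest-path step by $O(A^4)$ via Bellman--Ford where the paper uses the sharper $O(f(E'+V'\log V'))=O(A^3)$ form; both are absorbed by the stated bound.
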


\subsection{Greedy Assignment with Refinement}
\begin{algorithm}[tb]
  \caption{\textbf{Greedy Assignment with Refinement}}
  \label{algo:target-greedy}
  {\small
  \begin{algorithmic}[1]
    \item[\textbf{input}:~unlabeled-MAPF instance]
    \item[\textbf{output}:~$\mathcal{M}$: assignment, a set of pairs $s \in \mathcal{S}$ and $g \in \mathcal{T}$]
      \STATE initialize $\mathcal{M}$; initialize queue $\mathcal{U}$ by $A$
      \label{algo:greedy:init}
      \WHILE{$\mathcal{U} \neq \emptyset$}
      \STATE $a_i \leftarrow \mathcal{U}.\text{pop}()$
      \WHILE{true}
      \STATE $g \leftarrow$ the non-evaluated nearest target from $s_i$
      \label{algo:greedy:obtain-g}
      \IF{$\not\exists (s_j, g) \in \mathcal{M}$}
      \STATE add $(s_i, g)$ to $\mathcal{M}$; \textbf{break}
      \ELSIF{$\exists (s_j, g) \in \mathcal{M} \land \dist{s_i}{g} < \dist{s_j}{g}$}
      \label{algo:greedy:eval-pair}
      \STATE replace $(s_j, g) \in \mathcal{M}$ by $(s_i, g)$; $\mathcal{U}.\text{push}(a_j)$;
      \textbf{break}
      \ENDIF
      \label{algo:greedy:fin-operation}
      \ENDWHILE
      \ENDWHILE
      \label{algo:greedy:fin-init-assign}
      \smallskip
      \WHILE{$\mathcal{M}$ is updated in the last iteration}
      \label{algo:greedy:start-refine}
      \STATE $(s_i, g_i) \leftarrow \argmax_{(s, g) \in \mathcal{M}}\dist{s}{g}$;
      $c_{\text{now}} \leftarrow \dist{s_i}{g_i}$
      \label{algo:greedy:argmax}
      \FOR{$(s_j, g_j) \in \mathcal{M}$}
      \label{algo:greedy:start-try-swap}
      \IFSINGLE{$h(s_j, g_i) \geq c_{\text{now}}$}{\textbf{continue}}
      \COMMENT{for lazy evaluation}
      \STATE $c_{\text{swap}} \leftarrow \max\left(\dist{s_j}{g_i}, \dist{s_i}{g_j}\right)$
      \IFSINGLE{$c_{\text{swap}} < c_{\text{now}}$}{swap $g_i$ and $g_j$ of $\mathcal{M}$; \textbf{break}}
      \ENDFOR
      \label{algo:greedy:fin-try-swap}
      \ENDWHILE
      \label{algo:greedy:end-refine}
  \end{algorithmic}
  }
\end{algorithm}
Algorithm~\ref{algo:target-greedy} aims at finding a reasonable assignment for makespan as quickly as possible, which uses;
\begin{itemize}
\item \emph{greedy assignment} [Lines~\ref{algo:greedy:init}--\ref{algo:greedy:fin-init-assign}]; assigns one target to one agent step by step, while allowing reassignment if a better assignment will be expected [Lines~\ref{algo:greedy:eval-pair}--\ref{algo:greedy:fin-operation}].
\item \emph{iterative refinement} [Lines~\ref{algo:greedy:start-refine}--\ref{algo:greedy:end-refine}]; swaps targets of two agents until no improvements are detected.
\item \emph{lazy evaluation} of distances for start-target pairs, implemented by pausing the breadth-first search as soon as the query start-target pair is in the search tree.
\end{itemize}

This algorithm is expected to run in a very short time;
\begin{proposition}
  The time complexity of Algorithm~\ref{algo:target-greedy} is $O(A(V+E))$.
  \label{prop:complexity-greedy}
\end{proposition}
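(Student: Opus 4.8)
The plan is to show that the running time is governed by distance computation and that every remaining piece of bookkeeping is subsumed by the $O(A(V+E))$ budget. First I would observe that each distance query in Algorithm~\ref{algo:target-greedy}, in both the greedy phase and the refinement phase, is served by a single breadth-first search rooted at $s_i$, one per agent $a_i$. The lazy evaluation never restarts a search: it pauses a BFS as soon as the currently requested target enters the search tree and resumes it, from the same frontier, when a farther target is later requested. Since a BFS from a single source visits each vertex and edge at most once it costs $O(V+E)$, and because there are $A$ sources the total distance-computation cost is $O(A(V+E))$ irrespective of how many times distances are queried. This is the dominant term, so what remains is to verify that the non-search work of each phase fits inside the same bound.

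For the \textbf{greedy phase} [Lines~\ref{algo:greedy:init}--\ref{algo:greedy:fin-init-assign}] I would bound how often an agent is popped from $\mathcal{U}$. Each pop of $a_i$ consumes at least one target that is newly evaluated for $a_i$ (the one returned at Line~\ref{algo:greedy:obtain-g}), and since the lazy BFS from $s_i$ discovers each target at most once over the whole run, $a_i$ can be popped at most $|\mathcal{T}| = O(A)$ times. Summing over agents gives $O(A^2)$ pops, each doing only $O(1)$ work beyond the already-accounted distance look-up. Because initial locations are distinct we have $A \le V$, hence $A^2 \le A\cdot V \le A(V+E)$, and the greedy bookkeeping is absorbed.

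For the \textbf{refinement phase} [Lines~\ref{algo:greedy:start-refine}--\ref{algo:greedy:end-refine}] I would first establish termination and an iteration bound via a bottleneck potential. A swap is performed only when both resulting costs fall strictly below $c_{\text{now}} = \max_{(s,g)\in\mathcal{M}}\dist{s}{g}$, so the global maximum cost is non-increasing and the number of matched pairs attaining the current maximum strictly decreases with every swap, no pair ever being lifted to a cost at or above the current maximum. Treating the sorted (descending) cost vector as the potential, each swap strictly decreases it lexicographically, which simultaneously proves termination and bounds the number of outer iterations. Each outer iteration costs $O(A)$ for the $\argmax$ at Line~\ref{algo:greedy:argmax} and for the scan at Lines~\ref{algo:greedy:start-try-swap}--\ref{algo:greedy:fin-try-swap}, while the distances $\dist{s_j}{g_i}$ and $\dist{s_i}{g_j}$ it requests are again charged to the per-agent BFS budget, with the pruning test $h(s_j,g_i)\ge c_{\text{now}}$ discarding most candidates for free.

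The \textbf{main obstacle} is exactly this last accounting: reconciling the worst-case number of refinement iterations with the stated bound. The bottleneck-count argument alone yields at most $O(A)$ swaps per distinct maximum value, hence $O(A\cdot\diam(G))$ swaps overall, whose naive $O(A)$-per-iteration scan need not fit inside $O(A(V+E))$. I would therefore rely on the fact that the genuinely expensive operation, distance evaluation, is already capped at $O(A(V+E))$ independently of the iteration count, and then argue that the residual per-iteration scan is dominated, either by a sharper iteration bound or by amortizing each scan against the distance work it triggers. Making this amortization airtight, rather than the routine BFS and greedy counting, is where the care is needed.
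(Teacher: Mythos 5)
Your decomposition is the same as the paper's: charge all distance queries to one pausable BFS per agent for a dominant $O(A(V+E))$ term, bound the greedy phase by $O(A^2)$ evaluations (absorbed since $A\le V$), and bound the refinement phase by (number of outer iterations) times $O(A)$ per iteration. The BFS and greedy parts of your argument are fine and match the paper. The gap is exactly where you locate it, and you do not close it. The paper closes it with a single claim: every iteration of the outer loop in Lines~\ref{algo:greedy:start-refine}--\ref{algo:greedy:end-refine} strictly reduces $\max_{(s,g)\in\mathcal{M}}\dist{s}{g}$, so the loop runs at most $\diam(G)$ times, the refinement phase costs $O(A\cdot\diam(G))\subseteq O(A\cdot V)\subseteq O(A(V+E))$, and the total $O(A(V+E)+A^2+A\cdot\diam(G))$ collapses to $O(A(V+E))$.

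Your objection to that claim is legitimate: when several pairs tie for the maximum, one swap removes only one of them and the maximum is unchanged, so the honest count from your level argument is $O(A\cdot\diam(G))$ iterations and hence $O(A^2\cdot\diam(G))$ refinement work, which is \emph{not} $O(A(V+E))$ in general (on a path graph with $A=\Theta(V)$ it is $\Theta(V^3)$ against a target of $\Theta(V^2)$). The amortization you hope for cannot rescue this: the pruning test $h(s_j,g_i)\ge c_{\text{now}}$ costs $O(1)$ per candidate whether or not it triggers a distance evaluation, so an iteration can do $\Theta(A)$ scanning work while charging nothing new to the BFS budget, leaving nothing to amortize against. As submitted, then, your proposal proves the proposition under neither reading: you reject the paper's $\diam(G)$ iteration bound but never supply a replacement. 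To finish you must either justify that each outer iteration does strictly reduce the maximum cost (which requires dealing with ties, not just asserting it) or produce a different charging scheme for the scans; flagging the difficulty accurately, as you do, is not the same as resolving it.
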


Algorithm~\ref{algo:target-greedy} presents the refinement for makespan but the refinement for sum-of-costs is straightforward (see Alg.~\ref{algo:target-greedy-soc} in the Appendix).

\section{Evaluation of Offline Planning}
\label{sec:evaluation}
The experiments aim at demonstrating that Offline TSWAP is efficient, i.e., it returns near-optimal solutions within a short time and scales well, depending on initial assignments.
In particular, this section has three aspects:
(1)~illustrating the effect of initial assignments including the proposed assignment algorithms,
(2)~comparing with the makespan-optimal polynomial-time algorithm~\cite{yu2013multi},
(3)~assessing another metric, sum-of-costs.
We carefully picked up several 4-connected grids from MAPF benchmarks~\cite{stern2019def} as a graph $G$, shown in Fig.~\ref{fig:maps}; they are common in MAPF studies.
The simulator was developed in C++ and the experiments were run on a laptop with Intel Core~i9 \SI{2.3}{\giga\hertz} CPU and \SI{16}{\giga\byte} RAM.
For each setting, we created $50$~instances while randomly generating starts and targets.
Implementation details of \cite{yu2013multi} are described in the Appendix.
Throughout this section, the runtime evaluation of TSWAP includes both target assignment and path planning.

{
  \setlength{\tabcolsep}{1pt}
  \newcommand{\colwidth}{0.18\hsize}
  \newcommand{\imgwidth}{0.9\hsize}
  \newcommand{\col}[3]{
     \begin{minipage}{\colwidth}
       \centering
       {\tiny\textit{#1}}\\
       \includegraphics[width=\imgwidth]{fig/raw/#1.pdf}\\
       {\vspace{-0.2cm}\tiny\m{#2}}\\
       {\vspace{-0.15cm}\tiny\m{(#3)}}
     \end{minipage}
  }
  \begin{figure}[ht!]
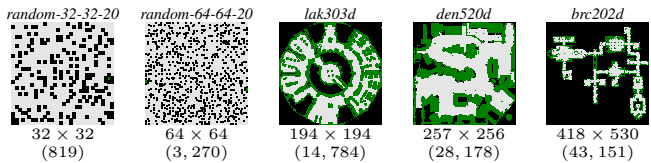

    \advance\leftskip-0.12cm
    \begin{tabular}{ccccc}
      \col{random-32-32-20}{32\times 32}{819}
      \col{random-64-64-20}{64\times 64}{3,270}
      \col{lak303d}{194\times 194}{14,784}
      \col{den520d}{257\times 256}{28,178}
      \col{brc202d}{418\times 530}{43,151}
    \end{tabular}
    \caption{\textbf{Used maps.} $|V|$ is shown with parentheses.}
    \label{fig:maps}
  \end{figure}
}

\subsection{Effect of Initial Target Assignment}
\label{sec:eval:assign}
{
  \renewcommand{\arraystretch}{0.9}
\begin{table}[t!]
  \centering
  {
    \newcommand{\cmid}{\cmidrule(lr){1-10}}
    \newcommand{\w}[1]{\textbf{#1}}  
    \setlength{\tabcolsep}{1.7pt}
    \scriptsize
    \begin{tabular}{rrrrrrrrrr}
      \toprule
      $|A|$ & metric
      & Alg.~\ref{algo:target-assignment}
      & Alg.~\ref{algo:target-assignment}$^\dagger$
      & Alg.~\ref{algo:target-assignment}$^{\dagger\ast}$
      & Alg.~\ref{algo:target-greedy}
      & Alg.~\ref{algo:target-greedy}$^\ast$
      & Alg.~\ref{algo:target-greedy-soc}
      & greedy
      & linear
      \\ \midrule
      \multirow{3}{*}{110}
            & runtime (ms) & 6 & 10 & 17 & \w{4} & 12 & \w{4} & 12 & 23 \\
            & makespan & \w{17} & \w{17} & \w{17} & 20 & 20 & 36 & 84 & 36\\
            & sum-of-costs & 1079 & \w{937} & \w{937} & 1139 & 1136 & \w{958} & 1396 & \w{940} \\ \cmid
      \multirow{3}{*}{500}
            & runtime (ms) & 72 & 174 & 213 & \w{15} & 78 & 22 & 77 & 602\\
            & makespan & \w{10} & \w{11} & \w{11} & 13 & 13 & 34 & 79 & 32 \\
            & sum-of-costs & 2595 & \w{2169} & \w{2169} & 2878 & 2860 & 2546 & 4653 & 2429 \\ \cmid
      \multirow{3}{*}{1000}
            & runtime (ms) & 335 & 757 & 882 & \w{28} & 233 & 58 & 221 & 3811\\
            & makespan & \w{9} & \w{9} & \w{9} & 11 & 11 & 28 & 71 & 26\\
            & sum-of-costs & 3591 & \w{2922} & \w{2922} & 4020 & 4043 & 3695 & 7571 & 3491 \\ \cmid
      \multirow{3}{*}{2000}
            & runtime (ms) & 1453 & 3035 & 3205 & \w{66} & 784 & 188 & 725 & 32944\\
            & makespan & \w{8} & \w{7} & \w{7} & 10 & 10 & 24 & 56 & 23 \\
            & sum-of-costs & 4670 & \w{3469} & \w{3469} & 5200 & 5244 & 5465 & 12292 & 5122 \\ \bottomrule
    \end{tabular}
  }
  \vspace{-0.2cm}
  \caption{\textbf{The results of TSWAP with different assignments in \textit{random-64-64-20}.}
    Bold characters are based on 95\% confidence intervals of the mean in the Appendix, which also presents the result on another map \textit{lak303d}.
  }
  \label{table:result-assignment}
\end{table}
}
The first part evaluates the effect of initial assignments on TSWAP while varying $|A|$.
We tested Alg.~\ref{algo:target-assignment} (bottleneck; minimizing maximum distance), Alg.~\ref{algo:target-assignment}$^\dagger$ (with min-cost maximum matching), Alg.~\ref{algo:target-greedy} (greedy with refinement for makespan), Alg.~\ref{algo:target-greedy-soc} (for sum-of-costs), naive greedy assignment~\cite{avis1983survey}, and optimal linear assignment (minimizing total distances) solved by the successive shortest path algorithm~\cite{ahuja1993network}.
Note that these assignment algorithms do not consider inter-agent collisions.
The last two used distances for start-target pairs obtained by the breadth-first search as costs.
To assess the effect of lazy evaluation, we also tested the adapted version of Alg.~\ref{algo:target-assignment}$^\dagger$ and Alg.~\ref{algo:target-greedy} without lazy evaluation, denoted as Alg.~\ref{algo:target-assignment}$^{\dagger\ast}$ and Alg.~\ref{algo:target-greedy}$^\ast$.

Table~\ref{table:result-assignment} summarizes the results on \textit{random-64-64-20}.
In summary, Alg.~\ref{algo:target-assignment} contributes to finding good solutions for makespan.
Algorithm~\ref{algo:target-assignment}$^\dagger$ significantly improves sum-of-costs.
Algorithm~\ref{algo:target-greedy} and Alg.~\ref{algo:target-greedy-soc} are blazing fast while solution qualities outperform those of the naive greedy assignment.
The lazy evaluation speedups each assignment algorithm.
The optimal linear assignment requires time because its time complexity is $O(A^3)$.

\subsection{Makespan-optimal Algorithm v.s. TSWAP}

{
  \newcommand{\colwidth}{0.49\hsize}
  \setlength{\tabcolsep}{0pt}
  \begin{figure}[t!]
    \centering
    \begin{tabular}{cc}
      \begin{minipage}{\colwidth}
        \centering
        \includegraphics[width=1\hsize]{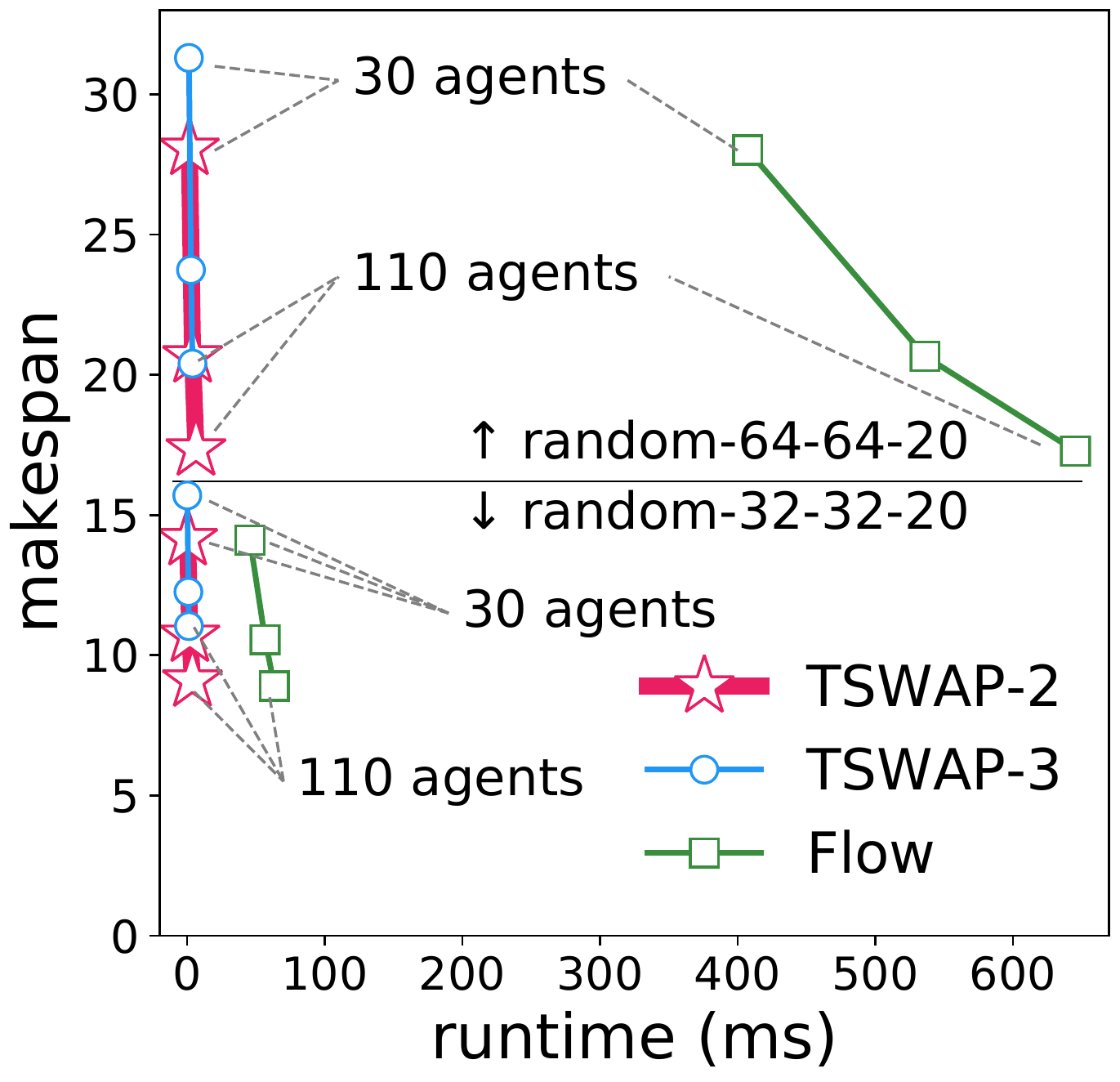}
      \end{minipage}
      \begin{minipage}{\colwidth}
        \centering
        \includegraphics[width=1\hsize]{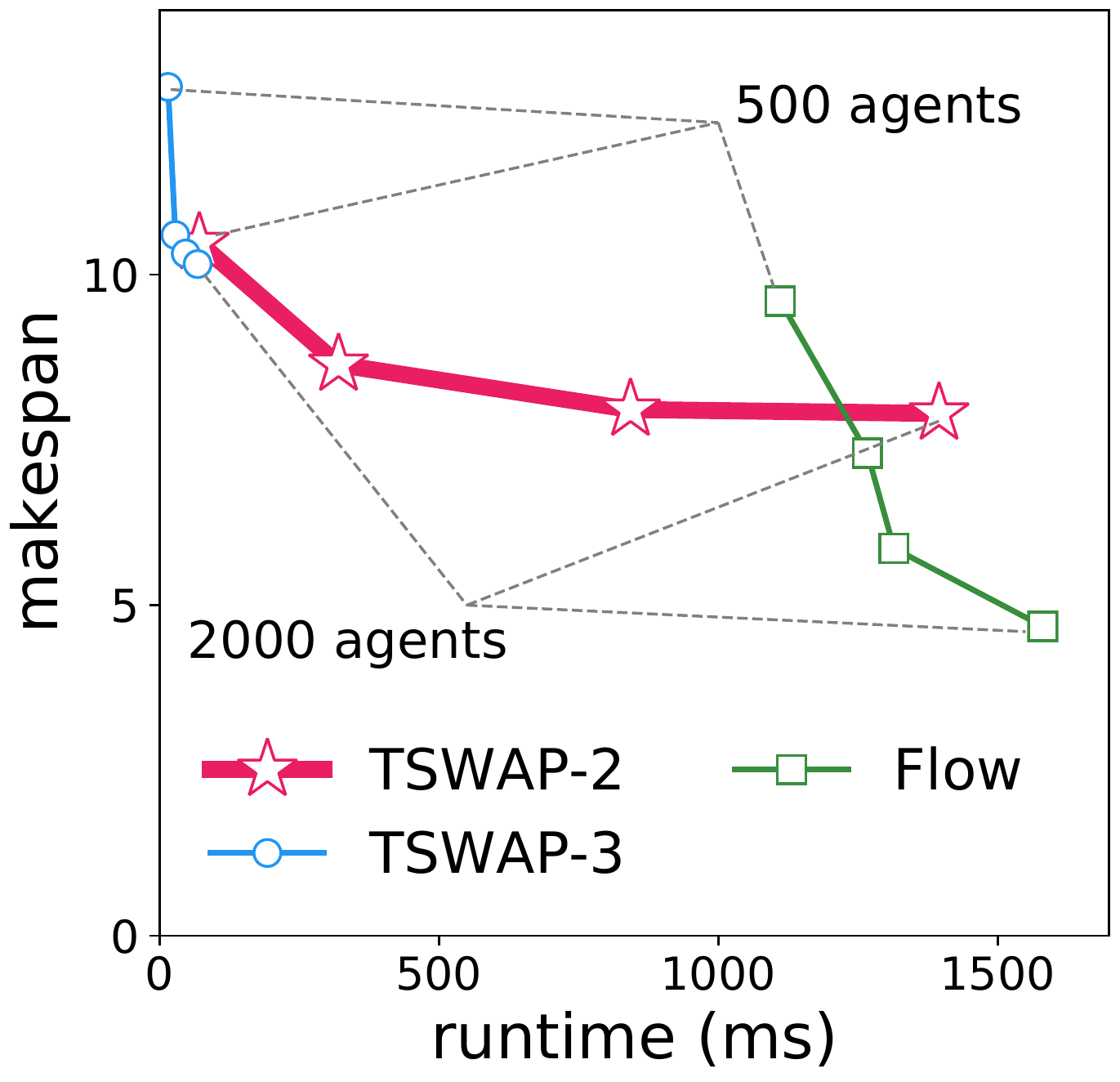}
      \end{minipage}
    \end{tabular}
    \vspace{-0.3cm}
    \caption{
      \textbf{The average makespan and runtime.}
      Alg.~\ref{algo:target-assignment} (TSWAP-2) and Alg.~\ref{algo:target-greedy} (TSWAP-3) were used in TSWAP.
      ``Flow'' is the optimal algorithm.
      \emph{left}: $|A| \in \{30, 70, 110\}$.
      \emph{right}: $|A| \in \{500, 1000, 1500, 2000\}$ on \textit{random-64-64-20}.
    }
  \label{fig:result-random}
  \end{figure}
}

This part is further divided into two:
(1)~assessing scalability for both $G$ and $A$, and
(2)~testing the solvers in large graphs.
Figure~\ref{fig:result-random} and Table~\ref{table:result-huge} summarize the results.

Figure~\ref{fig:result-random} (left) displays the average makespan and runtime of ``quadrupling'' the size of $G$, i.e., those of \textit{random-64-64-20}, regarding the results of \textit{random-32-32-20} as a baseline.
Figure~\ref{fig:result-random} (right) shows dense situations ($|A| \geq |V|/8$).
The main observations are:
(1)~TSWAP quickly yields near-optimal solutions in non-dense situations with either Alg.~\ref{algo:target-assignment} or Alg.~\ref{algo:target-greedy}.
(2)~The runtime of TSWAP remains small when enlarging $G$ while the optimal algorithm increases dramatically.
(3)~As agents increase, the runtime of TSWAP with Alg.~\ref{algo:target-assignment} quickly increases compared to the optimal algorithm, because Alg.~\ref{algo:target-assignment} is quartic on $|A|$ (see Prop.~\ref{prop:complexity-all}).
Meanwhile, TSWAP with Alg.~\ref{algo:target-greedy} immediately yields solutions even with a few thousand agents.
Note that, as the number of agents increases, the optimal makespan decreases because we set initial locations and targets randomly.

Table~\ref{table:result-huge} shows the results on large graphs with a timeout of \SI{5}{\minute}.
The optimal algorithm took time to return solutions or sometimes failed before the timeout, whereas TSWAP succeeded in all cases in a comparatively very short time, thus highlighting the need for sub-optimal algorithms of unlabeled-MAPF.
In addition, TSWAP yields high-quality solutions for the makespan.

{
  \renewcommand{\arraystretch}{0.9}
\begin{table}[t!]
  \centering
  {
    \newcommand{\cmid}{\cmidrule(lr){1-10}}
    \newcommand{\w}[1]{\textbf{#1}}  
    \setlength{\tabcolsep}{3pt}
    \scriptsize
    \begin{tabular}{rrrrrrrrrr}
      \toprule
      &&\multicolumn{3}{c}{runtime (sec)}
      & \multicolumn{3}{c}{success rate(\%)}
      & \multicolumn{2}{c}{sub-optimality}
      \\ \cmidrule(lr){3-5}\cmidrule(lr){6-8}\cmidrule(lr){9-10}
      map & $|A|$
      & Flow & Alg.~\ref{algo:target-assignment} & Alg.~\ref{algo:target-greedy}
             & Flow & Alg.~\ref{algo:target-assignment} & Alg.~\ref{algo:target-greedy}
             & Alg.~\ref{algo:target-assignment} & Alg.~\ref{algo:target-greedy}
      \\ \midrule
      \multirow{4}{*}{\textit{lak303d}}
      & 100 & 26.2 & 0.0 & \w{0.0} & 100 & 100 & 100 & 1.001 & 1.001\\
      & 500 & 60.6 & 0.6 & \w{0.1} & 100 & 100 & 100 & 1.009 & 1.022\\
      & 1000 & 54.7 & 3.5 & \w{0.2} & 100 & 100 & 100 & 1.064 & 1.073\\
      & 2000 & 56.3 & 25.4 & \w{0.4} & 100 & 100 & 100 & 1.340 & 1.358 \\ \cmid
      \multirow{4}{*}{\textit{den520d}}
      & 100 & 46.0 & 0.0 & \w{0.0} & 100 & 100 & 100 & 1.000 & 1.052\\
      & 500 & 67.5 & 0.3 & \w{0.1} & 100 & 100 & 100 & 1.003 & 1.118\\
      & 1000 & 82.3 & 1.6 & \w{0.2} & 98 & 100 & 100 & 1.014 & 1.097\\
      & 2000 & 89.8 & 9.3 & \w{0.4} & 100 & 100 & 100 & 1.043 & 1.169 \\ \cmid
      \multirow{4}{*}{\textit{brc202d}}
      & 100  & 141.9 & 0.1 & \w{0.1} & 60 & 100 & 100 & 1.000 & 1.001 \\
      & 500  & 214.5 & 0.7 & \w{0.3} & 48 & 100 & 100 & 1.001 & 1.003 \\
      & 1000 & 238.5 & 2.7 & \w{0.6} & 42 & 100 & 100 & 1.002 & 1.007 \\
      & 2000 & 230.2 & 14.8 & \w{1.1} & 16 & 100 & 100 & 1.021 & 1.026 \\
      \bottomrule
    \end{tabular}
  }
  \vspace{-0.2cm}
  \caption{\textbf{The results in large graphs.}
    ``Flow'' is the optimal algorithm.
    The scores are averages over instances that were solved by all solvers.
    The sub-optimality is for makespan, dividing the makespan of TSWAP by the optimal scores.
  }
  \label{table:result-huge}
\end{table}
}
\begin{figure}[t]
  \centering
  \includegraphics[width=0.95\hsize]{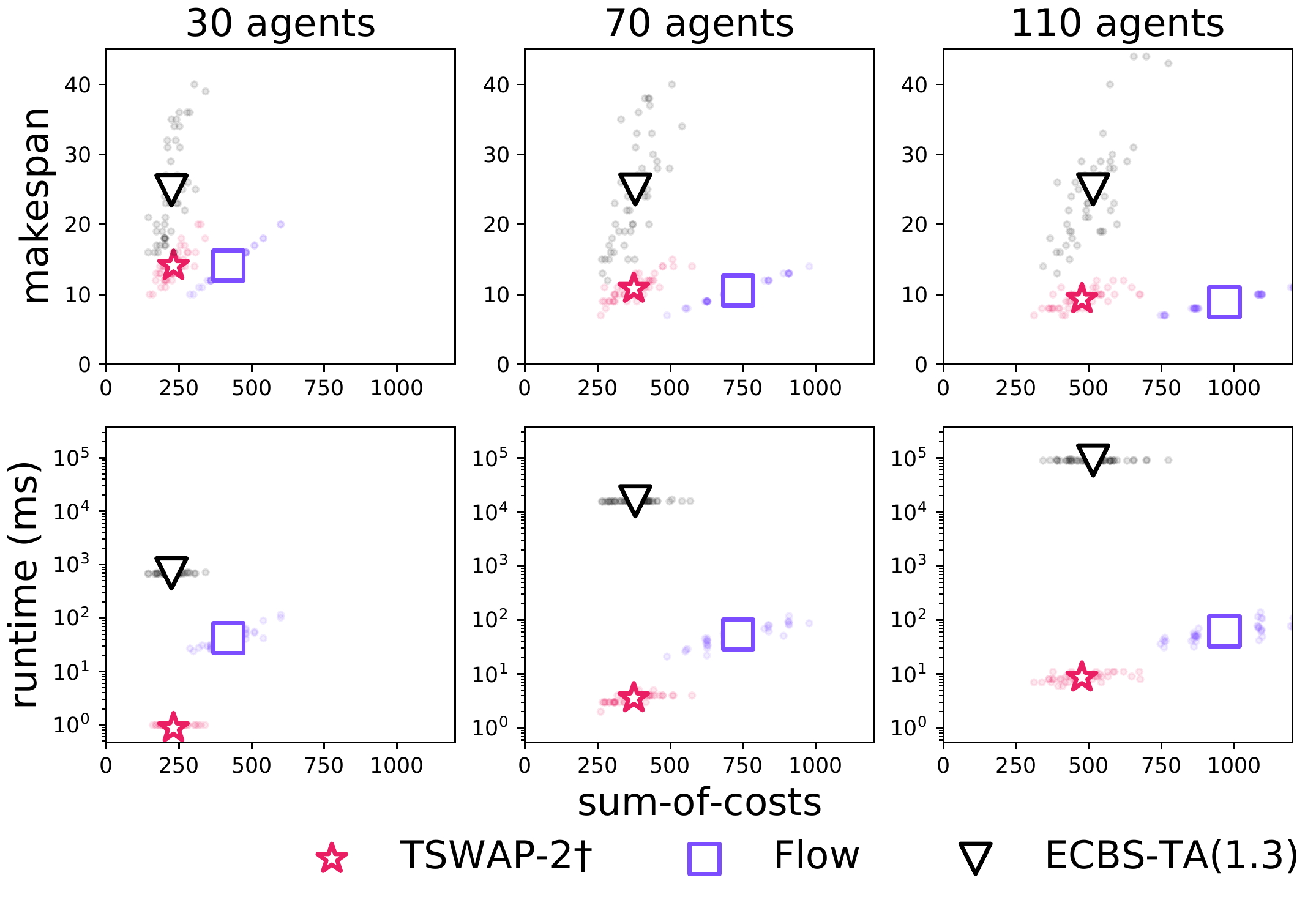}
  \vspace{-0.3cm}
  \caption{\textbf{The results for the sum-of-costs metric.}
    We used \textit{random-32-32-20}.
    The average scores are plotted.
    We also show scatter plots of 50 instances by transparent points.
  }
  \label{fig:result-soc}
\end{figure}

\subsection{Sum-of-costs Metric}
We next evaluate the sum-of-costs metric.
As a baseline, we used ECBS-TA~\citep{honig2018conflict}, which yields bounded sub-optimal solutions with respect to the sum-of-costs.
The implementation of ECBS-TA was obtained from the authors.
We used \textit{random-32-32-20} with 30, 70, and 110 agents.
The sub-optimality of ECBS-TA was set to \num{1.3}, which was adjusted to solve problems within the acceptable time (\SI{5}{\minute}).
TSWAP used Alg.~\ref{algo:target-assignment}$^\dagger$.
Note that we preliminary confirmed that ECBS-TA in denser situations failed to return solutions within a reasonable time.

Figure~\ref{fig:result-soc} shows that TSWAP yields solutions with acceptable quality while reducing computation time (lower, vertical axis) by orders of magnitude compared to the others;
the quality of sum-of-costs (horizontal axis) is competitive with ECBS-TA, with makespan quality close to optimal (upper, vertical axis).
TSWAP is significantly faster than ECBS-TA because, unlike ECBS-TA, TSWAP uses a one-shot target assignment and a simple path planning process.

\section{Online TSWAP}
\label{sec:online-tswap}
TSWAP is not limited to offline planning and can also adapt to online planning with timing uncertainties.
Algorithm~\ref{algo:online} presents \emph{Online TSWAP} to solve the online time-independent problem.
Before execution, TSWAP assigns targets to each agent [Lines~\ref{algo:online:matching}--\ref{algo:online:assign}].
During execution, the online version runs the procedure of the offline version for one agent [Line~\ref{algo:online:main}].
If the virtual location (i.e., $a.v$) is updated, then let the agent \emph{actually} moves there [Line~\ref{algo:online:move}].

{
  \renewcommand{\hl}[1]{\n{\textcolor{blue}{#1}}}
  \begin{algorithm}[tb!]
    \caption{\textbf{Online TSWAP}}
    \label{algo:online}
    {\small
    \begin{algorithmic}[1]
    \item[\textbf{input}:~unlabeled-MAPF instance]
    \item[\textbf{offline phase}]
      \STATE get an initial assignment $\mathcal{M}$
      \label{algo:online:matching}
      \STATE $a_i.v, a_i.g \leftarrow (s_i, g) \in \mathcal{M}$~:~for each agent $a_i \in A$
      \label{algo:online:assign}
      \vspace{0.1cm}
    \item[\textbf{online phase; when $a \in A$ is activated}]
      \STATE execute Lines~\ref{algo:offline:at-goal}--\ref{algo:offline:end-main} in Algorithm~\ref{algo:offline}
      \label{algo:online:main}
      \IFSINGLE{$a.v$ is updated}{move $a$ to $a.v$}
      \label{algo:online:move}
    \end{algorithmic}
    }
  \end{algorithm}
}
{
  \begin{figure*}
    \centering
    \includegraphics[width=1.0\hsize]{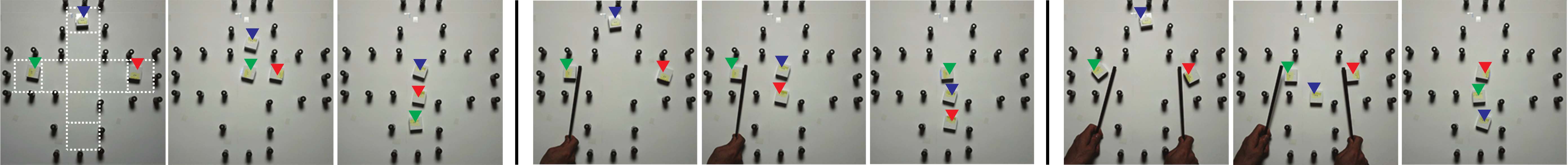}
    \vspace{-0.5cm}
    \caption{
      \textbf{Demo of time-independence of Online TSWAP.}
      We prepared three scenarios with identical initial (left for each) and terminal (right for each) configurations.
      A graph is illustrated on the leftmost image.
      We further illustrated colored triangles to distinguish each robot.
      Although the experimenter disturbed robots' progression with chopsticks during the execution (middle and right settings), all robots reach the targets while flexibly swapping their assigned targets.
    }
    \label{fig:time-independence}
  \end{figure*}
}

{
  \setlength{\tabcolsep}{2pt}
  \begin{figure*}
    \centering
    \begin{tabular}{cc|cc|cc}
      \begin{minipage}{0.16\hsize}
        \centering
        \includegraphics[width=0.9\hsize]{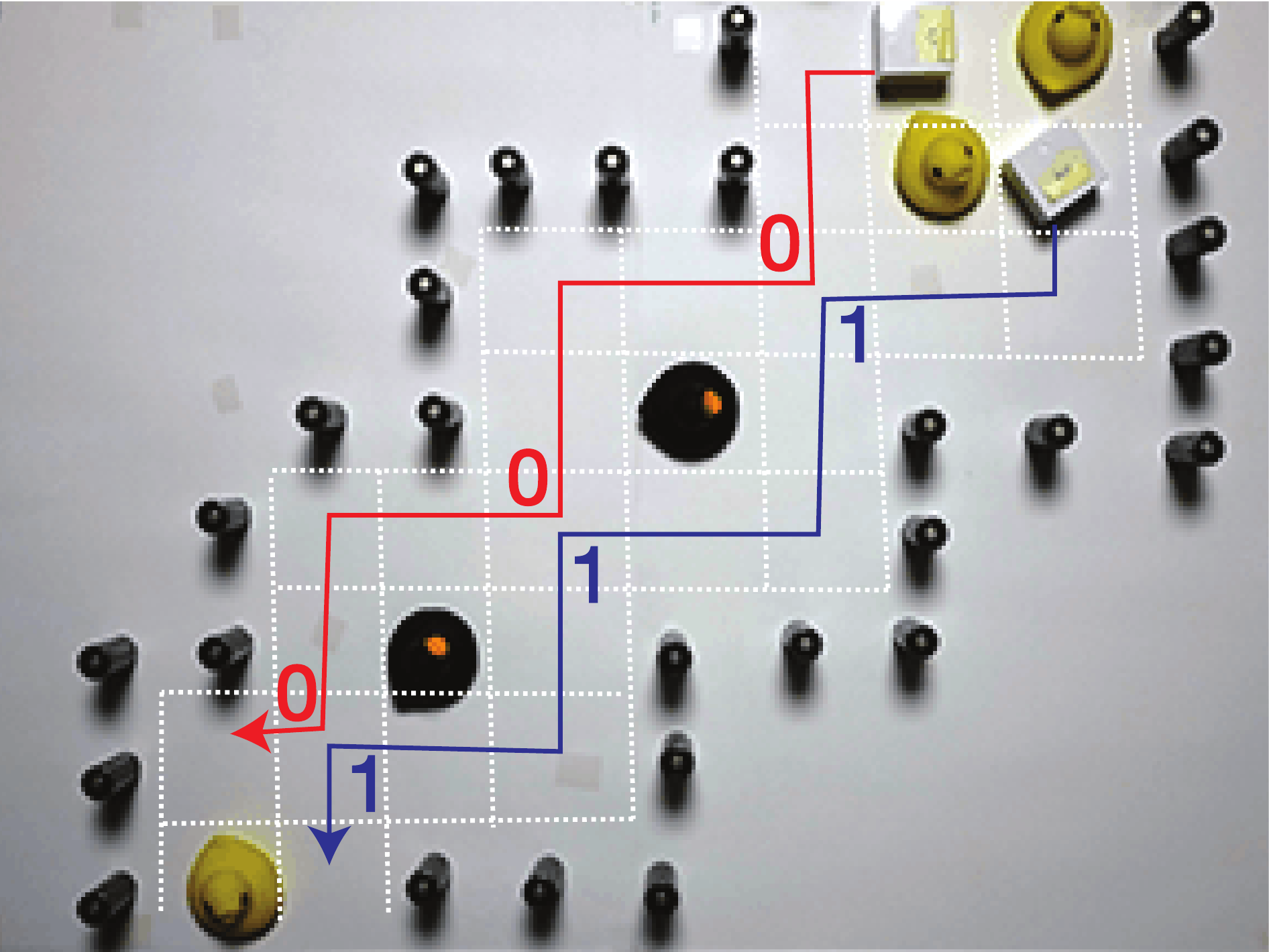}
      \end{minipage}
      &
      \begin{minipage}{0.16\hsize}
        \centering
        \includegraphics[width=1\hsize]{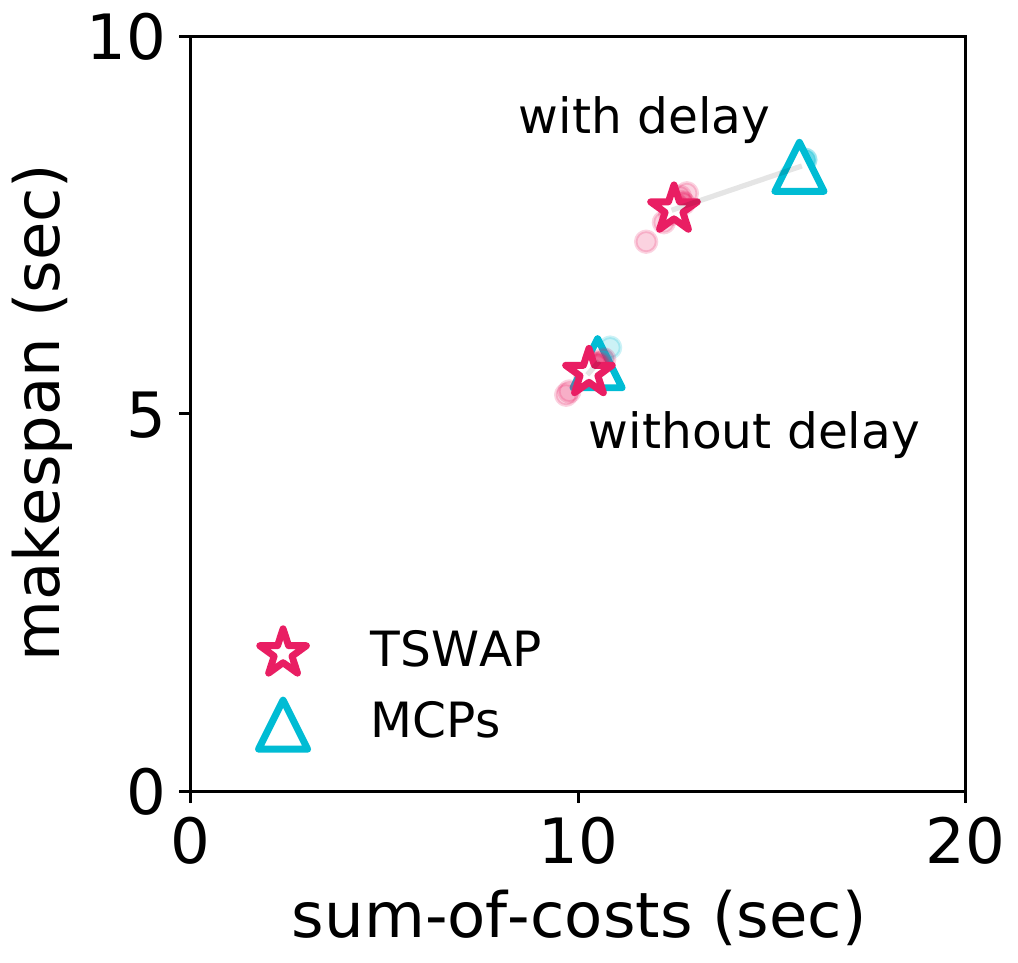}
      \end{minipage}
      &
      \begin{minipage}{0.16\hsize}
        \centering
        \includegraphics[width=0.9\hsize]{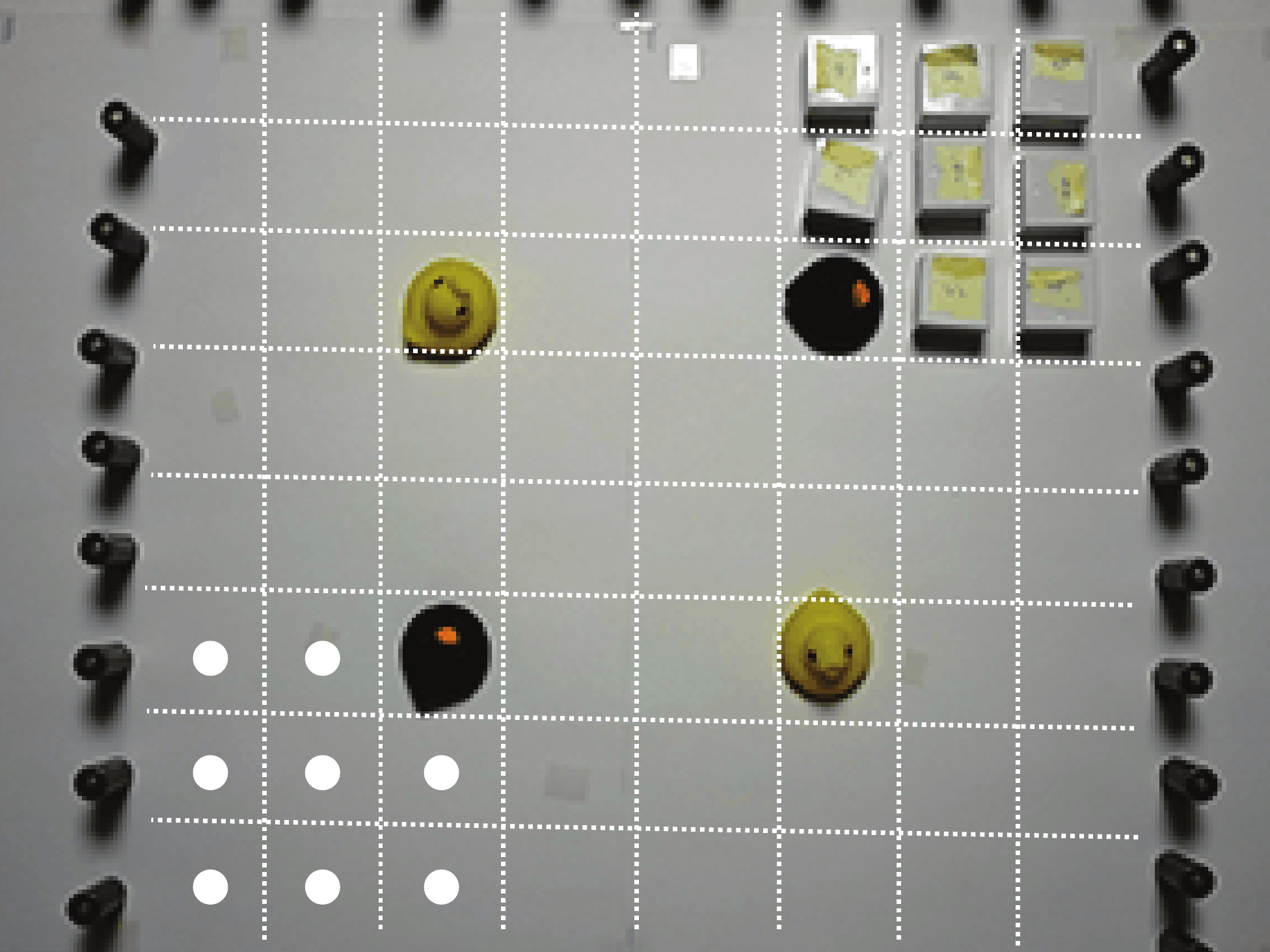}
      \end{minipage}
      &
      \begin{minipage}{0.16\hsize}
        \centering
        \includegraphics[width=1\hsize]{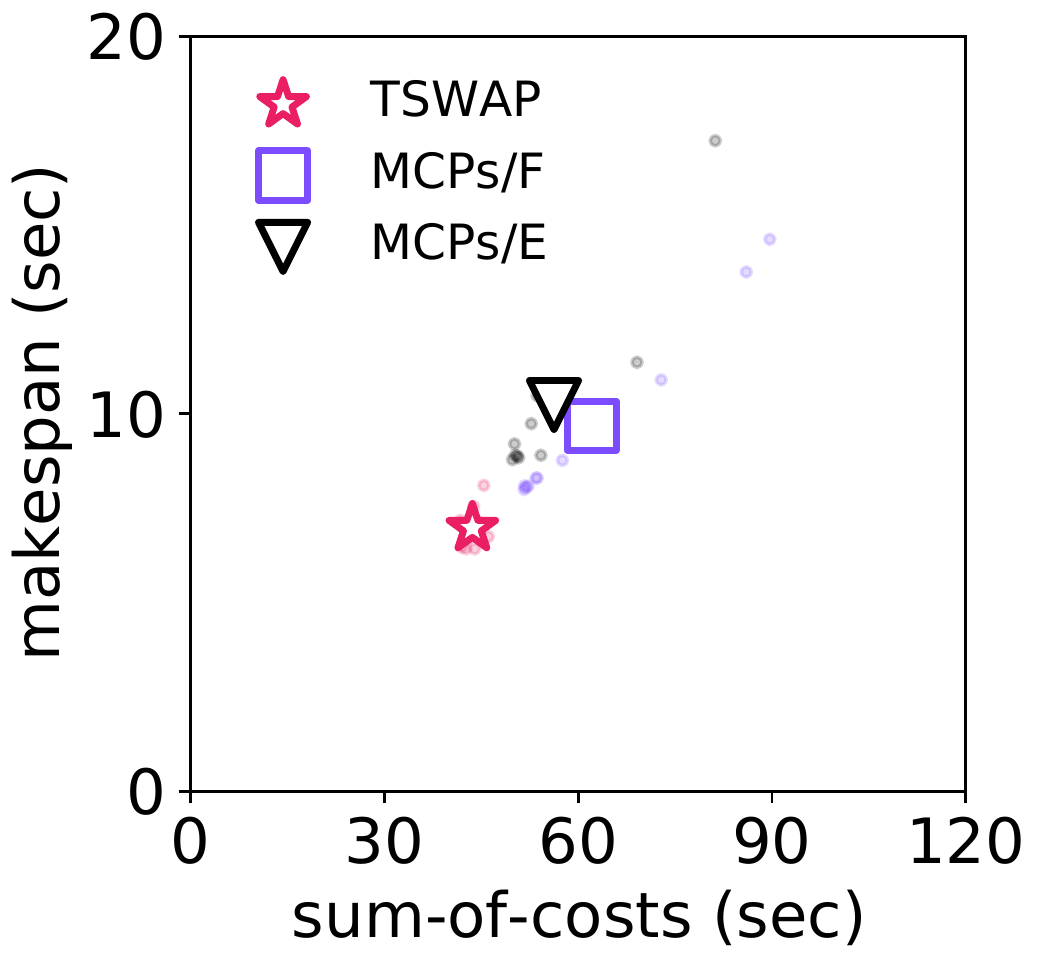}
      \end{minipage}
      &
      \begin{minipage}{0.16\hsize}
        \centering
        \includegraphics[width=0.9\hsize]{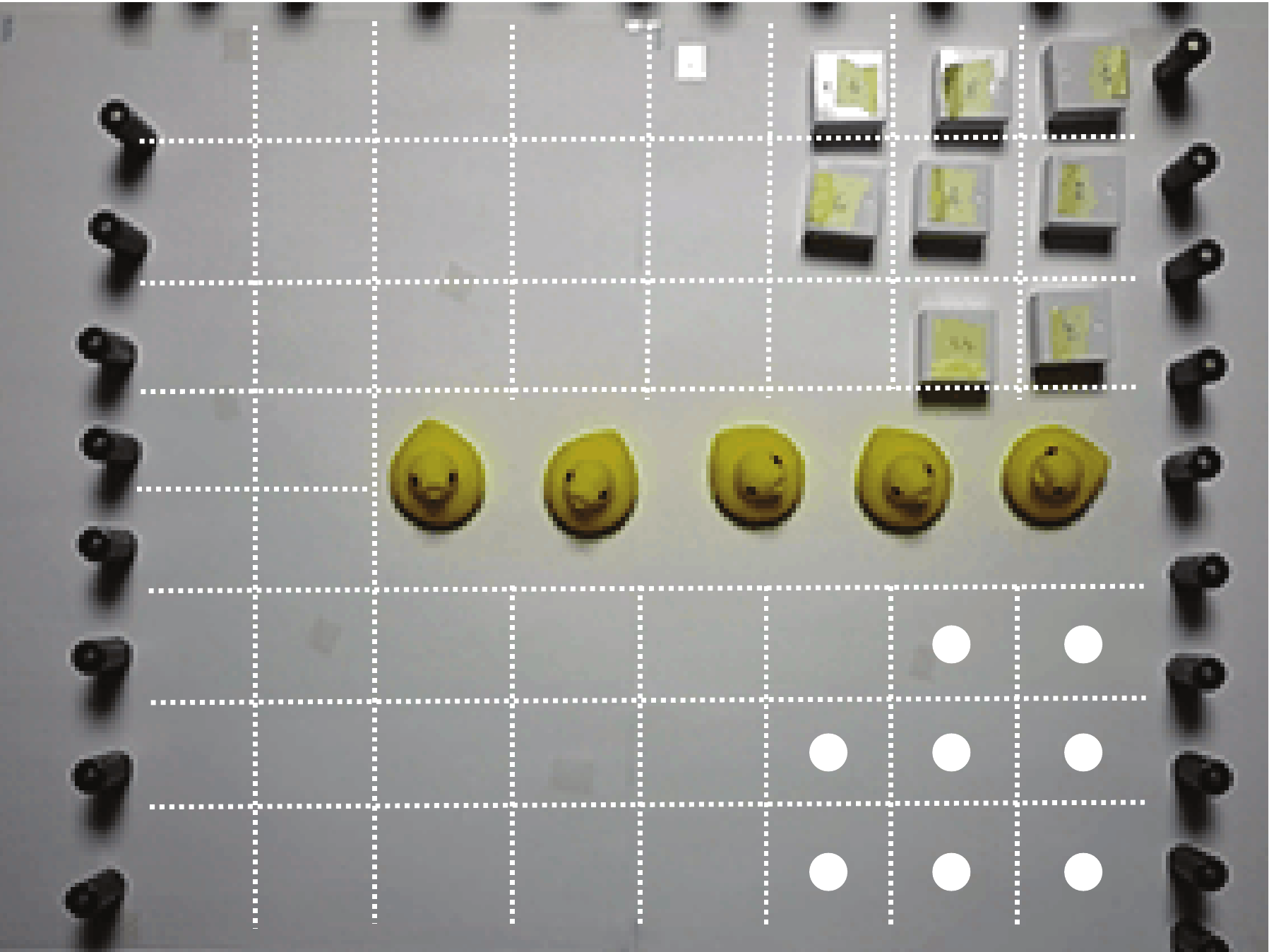}
      \end{minipage}
      \begin{minipage}{0.16\hsize}
        \centering
        \includegraphics[width=1\hsize]{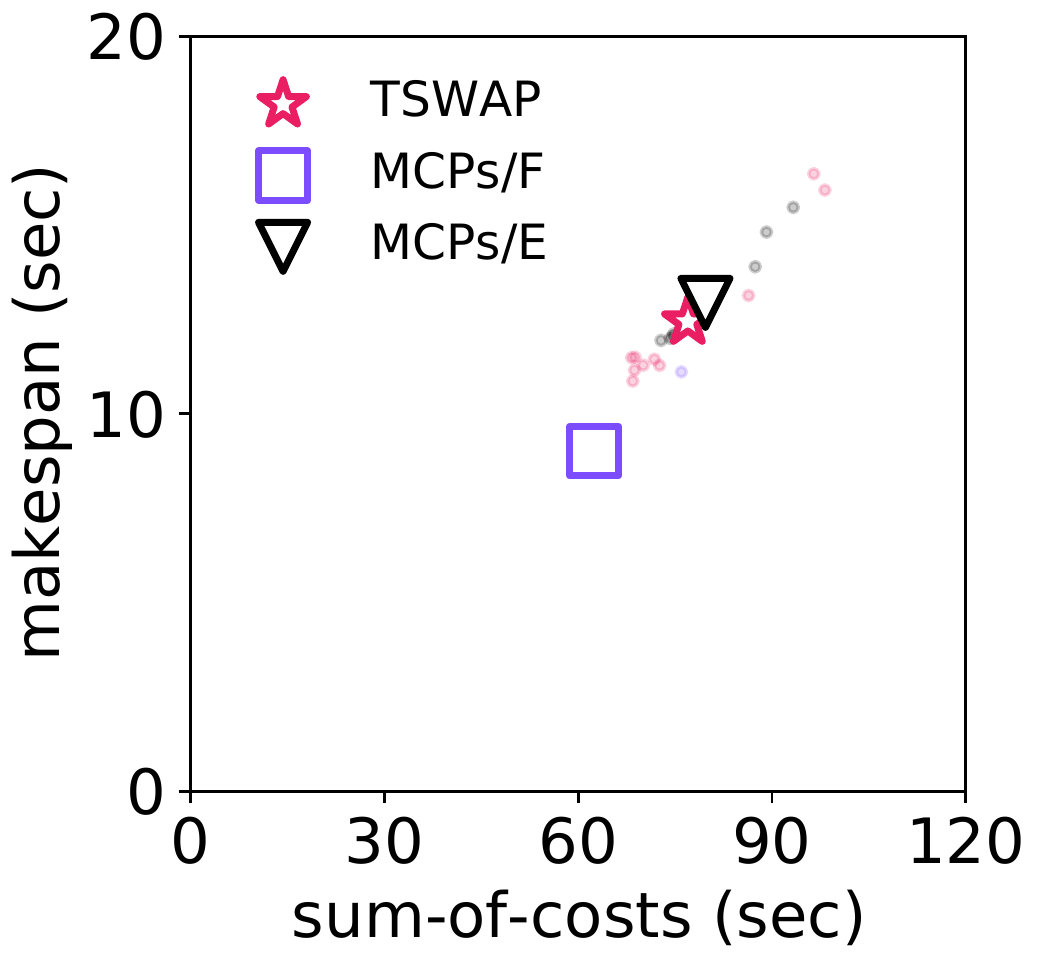}
      \end{minipage}
    \end{tabular}
    \vspace{-0.3cm}
    \caption{
      \textbf{Robot demos with average scores.}
      We also show scatter plots of 10 repetitions by transparent points.
      \emph{left: Delay tolerance.}
      The initial configuration and the offline plan, accompanied by temporal orders, used in MCPs are illustrated.
      Targets are shown by the tips of arrows.
      We made the red one's speed half in one scenario (with delay).
      \emph{middle \& right: Eight robots execution.}
      The initial configurations and virtual grids are shown.
      Targets are marked by white-filled circles.
      ``MCPs/F'' represents MCPs with an offline plan obtained by the makespan-optimal algorithm.
      ``MCPs/E'' represents those with ECBS-TA.
      The sub-optimality of ECBS-TA was $1.12$ and $1.2$ respectively; obtained incrementally increasing values from $1.0$ until solved.
    }
    \label{fig:demo}
  \end{figure*}
}

\begin{theorem}
  Online TSWAP (Algorithm.~\ref{algo:online}) is \emph{complete} for the online time-independent problem.
  \label{thrm:online}
\end{theorem}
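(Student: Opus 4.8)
The plan is to re-use the potential function $\phi$ introduced in the proof of Theorem~\ref{thrm:offline}, now tracked across individual activations rather than across full rounds. The online phase [Line~\ref{algo:online:main}] applies exactly the body executed for one agent inside the offline for-loop, so each activation performs at most one of: nothing (the agent is on its target, or it is blocked with neither a swap nor a detected deadlock), a move [Line~\ref{algo:offline:move}], a target swap [Line~\ref{algo:offline:swap}], or a target rotation [Line~\ref{algo:offline:rotation}]. First I would re-verify---exactly as in the offline analysis---that each operation is non-increasing on $\phi$: a move drops the first term $\dist{a.v}{a.g}$ by one while the second term (counting targets lying on the remaining shortest path) can only shrink, a swap strictly decreases the second term, and a rotation strictly decreases both terms. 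Since an activation touches only the activated agent's location (and the swapped/rotated targets), $\phi$ never increases under a single activation. Because $\phi$ is a non-negative integer and $\phi=0$ exactly characterizes termination (every $a.v=a.g$, hence every target occupied), it remains only to show $\phi$ reaches $0$ within finitely many activations.

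The core step is to show that, as long as $\phi>0$, any fair schedule forces $\phi$ to strictly decrease within finitely many activations. I would argue by contradiction: suppose that after some activation $\phi$ stays constant forever. Then no move, swap, or rotation ever occurs afterwards, so the entire configuration---the locations $a.v$ and the assignments $a.g$---is frozen. Let $B\subseteq A$ be the (nonempty, since $\phi>0$) set of agents with $a.v\neq a.g$. For each $a\in B$, the frozen assumption forces its next node $u=\nextnode{a.v}{a.g}$ to be occupied (otherwise activating $a$ would move it) by some agent $b$ with $u\neq b.g$ (otherwise activating $a$ would trigger a swap); consequently $b.v\neq b.g$, i.e. $b\in B$. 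This defines a total map $f\colon B\to B$ sending each $a$ to the occupant of its next node, and since $B$ is finite, iterating $f$ must produce a cycle---precisely a deadlock in the sense of Section~\ref{subsec:path-planning}.

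The hard part is bridging the gap between this global existence of a deadlock and the algorithm's detection rule [Line~\ref{algo:offline:deadlock-detection}], which only fires for a deadlocked set $A'$ containing the currently activated agent. This is exactly where fairness enters: the cycle found above consists of agents that each appear infinitely often in \exec, so some agent of the cycle is eventually activated; because the configuration is frozen it still lies on the same deadlock at that moment, the detection succeeds, the rotation [Line~\ref{algo:offline:rotation}] executes, and $\phi$ strictly decreases---contradicting the frozen assumption. (I would note that a chain under $f$ may enter a cycle through a tail of non-cycle agents, for whom detection need not fire when they are activated; applying fairness to the cycle's own members is what resolves this.) Concluding, $\phi$ cannot remain positive forever, so it strictly decreases until it reaches $0$; as each decrease removes at least one unit from a finite initial value, termination occurs after finitely many activations, establishing completeness.
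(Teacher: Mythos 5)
Your proposal is correct and follows essentially the same route as the paper: the paper's proof simply reuses the potential function $\phi$ from Theorem~\ref{thrm:offline} and invokes fairness to conclude that $\phi$ must decrease within any sufficiently long window in which every agent is activated at least once. Your write-up fills in details the paper leaves implicit---per-activation monotonicity of $\phi$, the explicit construction of the blocking map $f$ and its cycle, and the observation that fairness must be applied to a member of the cycle itself rather than to an agent on a tail feeding into it---but the underlying argument is the same.
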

\begin{proof}
  The most part is the same as for the offline version (Theorem~\ref{thrm:offline}).
  The problem assumes a fair execution schedule, therefore, $\phi$ must decrease within the sufficiently long period during which all agents are activated at least once; otherwise, $\phi=0$.
\end{proof}
\begin{proposition}
  Regardless of execution schedules, Online TSWAP has upper bounds of;
  \begin{itemize}
  \item maximum-moves: $O(A\cdot\diam(G))$
  \item sum-of-moves: $O(A\cdot\diam(G))$
  \end{itemize}
  \label{prop:online:uppper-bound}
\end{proposition}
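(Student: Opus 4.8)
The plan is to reuse the potential function $\phi$ from the proof of Theorem~\ref{thrm:offline} and to \emph{charge} each actual move of an agent against a unit decrease of $\phi$. The essential point is that every fact I need about $\phi$ is stated \emph{per local operation} (a move, a swap, or a rotation) rather than per synchronous timestep, so these facts are insensitive to the order of activations and therefore hold for every fair execution schedule.

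First I would bound the initial value $\phi_0$ of the potential. The first summand contributes $\sum_{a}\dist{a.v}{a.g}\le A\cdot\diam(G)$, since every pairwise distance is at most $\diam(G)$. For the second summand, each interior $\Pi(a.v,a.g)$ contains fewer than $\diam(G)$ nodes and the targets $\{b.g\}$ are all distinct, so at most $\diam(G)$ of them can sit on any single shortest path; summing over the $A$ agents gives another $A\cdot\diam(G)$. Hence $\phi_0=O(A\cdot\diam(G))$.

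Next I would show that every execution of Line~\ref{algo:offline:move}, i.e.\ an actual move of some agent $a$ to $u=\nextnode{a.v}{a.g}$, strictly decreases $\phi$. The first summand for $a$ drops by exactly one because $\dist{u}{a.g}=\dist{a.v}{a.g}-1$; the second summand for $a$ is non-increasing because $\Pi(u,a.g)=\Pi(a.v,a.g)\setminus\{u\}$; and, crucially, no other agent's contribution changes, because a move alters neither any target nor any other agent's current node. Combined with the completeness argument---where a swap strictly decreases the second summand and a rotation decreases both summands, so neither ever raises $\phi$---this implies that the total number of moves ever performed throughout the whole execution is at most $\phi_0=O(A\cdot\diam(G))$. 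That is exactly the \emph{sum-of-moves} bound, and since \emph{maximum-moves} $\le$ \emph{sum-of-moves}, the maximum-moves bound follows immediately.

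The step I expect to be the real obstacle is the online-specific one: checking that this charging argument survives the passage from synchronous rounds (offline) to one-agent-at-a-time activations. Here I would emphasise that the per-move analysis of $\phi$ is purely local---it inspects only $a$'s own distance and its own shortest-path interior---so it remains valid whenever a single agent acts in isolation, regardless of how many other agents have already been activated in the current stretch of the schedule. The only case needing extra care is a move that lands $a$ directly on its target ($u=a.g$), where the first summand reaches zero and $\Pi(u,a.g)=\emptyset$; this still produces a unit decrease, so the charge goes through and the two bounds are confirmed to be schedule-independent, matching the offline bounds of Proposition~\ref{prop:offline:upper-bound}.
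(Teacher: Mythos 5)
Your argument is correct and takes essentially the same route as the paper: Proposition~\ref{prop:online:uppper-bound} is proved by invoking the proof of Proposition~\ref{prop:offline:upper-bound}, which is exactly this per-operation charging of each actual move against a unit decrease of a potential of initial size $O(A\cdot\diam(G))$, an argument that is schedule-independent because it is local to the acting agent. The only difference is that the paper charges against the leaner potential $\psi=\sum_{a\in A}\dist{a.v}{a.g}$ (just the first summand of your $\phi$), which spares it from verifying how the second summand behaves under swaps and rotations; your use of the full $\phi$ also works but obliges you to check those extra cases.
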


\section{Demos of Online Planning}
\label{sec:demo}
This section rates Online TSWAP (using Alg.~\ref{algo:target-assignment}$^\dagger$) with real robots.
The video is available at \url{https://kei18.github.io/tswap}.

\paragraph{Platform}
We used \emph{toio} robots (\url{https://toio.io/}) to implement TSWAP.
The robots, connected to a computer via Bluetooth, evolve on a specific playmat and are controllable by instructions of absolute coordinates.

\paragraph{Usage}
The robots were controlled in a \emph{centralized} style, described as follows.
We created a virtual grid on the playmat; the robots followed the grid.
A central server (a laptop) managed the locations of all robots and issued the instructions (i.e., where to go) to each robot step-by-step.
The instructions were periodically issued to each robot (per \SI{50}{\milli\second}) but they were issued \emph{asynchronously} between robots.
The code was written in Node.js.

\paragraph{Demo of Time Independence}
We first show a three-robots demo highlighting the time independence of TSWAP.
In this demo, the experimenter disturbed robots' progression during the execution (Fig.~\ref{fig:time-independence}).
No matter when the robots move and no matter what order the robots move, the online problem is certainly solved.

\paragraph{Demo of Delay Tolerance}
We next show a simple setting that highlights the delay tolerance of TSWAP.
The comparison baseline is MCPs~\citep{ma2017multi}.
We prepared two scenarios with two robots while manipulating robots' speed.
In one scenario, robots move at usual speeds.
In another scenario, one of them halves its speed, i.e., with delays, assuming an accident happened.

Figure~\ref{fig:demo} shows the configuration and both makespan and sum-of-costs results over 10~repetitions measured in actual time.
Without delays, TSWAP and MCPs do not differ in the results;
however, with delays, the results of TSWAP are clearly better for both metrics.
In MCPs, disturbing/delaying one robot may critically affect the entire performance because it cannot change the temporal orders during the execution, while TSWAP can adaptively address such effects.

\paragraph{Demo with Eight Robots}
We finally present eight robots demos in Fig~\ref{fig:demo}.
Two scenarios were carefully designed to clarify the characteristics of TSWAP.
MCPs' offline plans were obtained by the makespan-optimal algorithm~\cite{yu2013multi} and ECBS-TA.
In the first scenario, TSWAP performed better than the other two, but not so in the second scenario.
This is because the latter has bottleneck nodes that all shortest paths from starts to targets use (two middle nodes in the second column).
Since TSWAP makes robots move following the shortest paths, all robots must use the bottleneck nodes, causing unavoidable congestion.
In contrast, the former does not have such bottleneck nodes, resulting in a small execution time compared to the others.

\section{Conclusion}
\label{sec:conclusion}
This paper presented a novel algorithm TSWAP to solve or execute unlabeled-MAPF; simultaneous target assignment and path planning problems for indistinguishable agents.
TSWAP is complete for both offline and online problems regardless of initial assignments.
We empirically demonstrated that it can solve large offline instances with acceptable solution quality in a very short time, depending on assignment algorithms.
It can also work in online situations with timing uncertainties as shown in our robot demos.

Future directions are:
(1)~applying TSWAP to other situations, e.g., lifelong scenarios, and
(2)~decentralized execution with only local interactions by Online TSWAP.
\section*{Acknowledgments}
We thank the anonymous reviewers for their many insightful comments.
This work was partly supported by JSPS KAKENHI Grant Numbers~20J23011.
Keisuke Okumura thanks the support of the Yoshida Scholarship Foundation.
\bibliography{ref}

\clearpage
\appendix
\section*{Appendix}

\renewcommand{\thesection}{\Alph{section}}
\setcounter{section}{0}
\setcounter{theorem}{1}
\setcounter{proposition}{0}

\section{Proofs}

\begin{proposition}
  Offline TSWAP has upper bounds of;
  \begin{itemize}
  \item makespan: $O(A\cdot\diam(G))$
  \item sum-of-costs: $O(A^2\cdot\diam(G))$
  \item maximum-moves: $O(A\cdot\diam(G))$
  \item sum-of-moves: $O(A\cdot\diam(G))$
  \end{itemize}
\end{proposition}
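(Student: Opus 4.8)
The plan is to leverage the potential function $\phi$ introduced in the proof of Theorem~\ref{thrm:offline}, which is integer-valued, never increases across iterations of the main loop, and strictly decreases on every iteration for which $\phi>0$. First I would bound the initial value $\phi_0$. Its first term satisfies $\sum_{a}\dist{a.v}{a.g}\le A\cdot\diam(G)$, since every shortest-path length is at most $\diam(G)$. For its second term, each set $\Pi(a.v,a.g)$ contains at most $\diam(G)-1$ interior nodes, and because targets are distinct each node hosts at most one target; so each agent contributes at most $\diam(G)$ and the whole term is again $O(A\cdot\diam(G))$. Hence $\phi_0=O(A\cdot\diam(G))$.

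The makespan bound then follows immediately: each pass through the while loop advances $t$ by one and, while $\phi>0$, lowers $\phi$ by at least one; therefore the number of timesteps $T$ is at most $\phi_0=O(A\cdot\diam(G))$. The maximum-moves bound is a corollary, since any agent moves at most once per timestep (one execution of Line~\ref{algo:offline:move} per loop pass), so its move count is at most $T$. The sum-of-costs bound follows by noting that every $T_i\le T$ and summing over the $A$ agents, which yields $O(A^2\cdot\diam(G))$.

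The sum-of-moves bound is the part requiring the most care, because the crude estimate ``moves $\le A\cdot T$'' only gives $O(A^2\cdot\diam(G))$. To recover the sharper $O(A\cdot\diam(G))$, I would charge each individual move directly against $\phi$. When an agent $a$ executes Line~\ref{algo:offline:move}, it steps onto $u=\nextnode{a.v}{a.g}$, so its distance term drops by exactly one; its path $\Pi(\cdot,a.g)$ only loses the node $u$, so the count of agents whose target lies on that path cannot increase; and because the move alters neither any other agent's location-to-target path nor the set of targets, no other summand of $\phi$ is affected. Thus every move decreases $\phi$ by at least one. Combined with the fact (established for Theorem~\ref{thrm:offline}) that the swap [Line~\ref{algo:offline:swap}] and rotation [Line~\ref{algo:offline:rotation}] operations never increase $\phi$, and that $\phi\ge 0$ throughout, the total number of moves over the entire execution is at most $\phi_0=O(A\cdot\diam(G))$.

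The main obstacle is precisely this sum-of-moves accounting: it hinges on verifying that a single move strictly decreases the potential while leaving every other agent's contribution untouched, and that none of the target-reassignment operations---which do not count as moves---can inflate $\phi$ back up. Once that bookkeeping is pinned down, the remaining three bounds are essentially immediate consequences of the potential being initialized at $O(A\cdot\diam(G))$.
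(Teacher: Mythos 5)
Your proof is correct and takes essentially the same route as the paper: bound the potential $\phi$ of Theorem~\ref{thrm:offline} by $O(A\cdot\diam(G))$ to get the makespan, multiply by $|A|$ for sum-of-costs, and obtain sum-of-moves by charging each move against a monotonically non-increasing potential. The only cosmetic difference is that the paper performs the move-charging with the simpler potential $\psi=\sum_{a\in A}\dist{a.v}{a.g}$ (and reads off maximum-moves from the sum-of-moves bound), whereas you charge moves against $\phi$ itself and derive maximum-moves from the makespan bound; both variants are valid.
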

\begin{proof}
  The potential function $\phi$ in the proof of Theorem~\ref{thrm:offline} is $O(A\cdot\mathit{diam}(G))$; this is the makespan upper bound.
  Note that the second term of $\phi$ is bounded by $\diam(G)$ because $\Pi$ assumes the shortest paths on $G$.
  That of sum-of-costs is trivially obtained by multiplying $|A|$.

  To derive the upper bound of sum-of-moves, consider another potential function $\psi \defeq \sum_{a \in A}\dist{a.v}{a.g}$ in Alg.~\ref{algo:offline}.
  Similarly to $\phi$, $\psi$ is non-increasing.
  $\psi$ becomes zero when the problem is solved.
  $\psi$ is decremented when $a.v$ is updated by $u$ not equal to $a.v$ by Line~\ref{algo:offline:move}, i.e., each ``move'' action decrements $\psi$.
  As a result, $\psi$ eventually reaches zero.
  Since $\psi = O(A\cdot\diam(G))$, we derive the upper bound of sum-of-moves.
  This bound works also for maximum-moves.
\end{proof}

\begin{proposition}
  Assume that the time complexity of $\mathsf{nextNode}$ and the deadlock resolution [Lines~\ref{algo:offline:deadlock-detection}--\ref{algo:offline:rotation}] in Alg.~\ref{algo:offline} are $\alpha$ and $\beta$\footnote{
    We treat $\beta$ as blackbox, but it is obviously implemented by $O(A\cdot\alpha)$.
    Note that this is too conservative in practice, i.e., deadlocks with many agents are rare.
  }, respectively.
  The time complexity of Offline TSWAP excluding Line~\ref{algo:offline:assign} is $O(A^2\cdot\diam(G)\cdot(\alpha + \beta))$.
\end{proposition}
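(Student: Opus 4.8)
The plan is to multiply together three independent factors: the number of iterations of the outer \textbf{while} loop, the number of iterations of the inner \textbf{for} loop per outer iteration, and the cost of a single inner iteration.

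First I would bound the number of outer iterations [Lines~\ref{algo:offline:loop-start}--\ref{algo:offline:loop-end}]. Each pass increments $t$ by one and records one configuration, and the loop halts precisely when every agent sits on its current target, i.e., when all targets are occupied. Hence the number of outer iterations equals the makespan produced by the run, which is $O(A\cdot\diam(G))$ by Proposition~\ref{prop:offline:upper-bound}. This is the step carrying the genuine content, since it rests on the potential-function argument behind the makespan bound; everything that follows is bookkeeping.

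Next I would account for a single outer iteration. The \textbf{for} loop [Lines~\ref{algo:offline:start-for}--\ref{algo:offline:end-for}] visits each of the $|A|$ agents once, contributing a factor $O(A)$. Within one visit of an agent $a$, the only super-constant operations are the call to $\mathsf{nextNode}$ [Line~\ref{algo:offline:next}], costing $\alpha$ by hypothesis, and the deadlock detection/resolution [Lines~\ref{algo:offline:deadlock-detection}--\ref{algo:offline:rotation}], costing $\beta$ by hypothesis. The rest are $O(1)$: the occupancy test ``$\exists b\in A$ with $b.v=u$'' [Line~\ref{algo:offline:if-occupied}] is a constant-time lookup if we maintain an auxiliary reverse map from vertices to their occupant, while the target swap [Line~\ref{algo:offline:swap}] and the move [Line~\ref{algo:offline:move}] each touch a constant number of fields (and update that map) in $O(1)$. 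Thus one inner iteration costs $O(\alpha+\beta)$.

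Combining the three factors yields $O(A\cdot\diam(G))\cdot O(A)\cdot O(\alpha+\beta)=O(A^2\cdot\diam(G)\cdot(\alpha+\beta))$, as claimed, with Line~\ref{algo:offline:assign} excluded by assumption. The one point I would treat carefully is the implicit data-structure assumption that renders the occupancy query and its maintenance $O(1)$; without it the inner iteration would acquire an extra $O(A)$ factor. Since a reverse index over agent positions is standard and updated in constant time per move, it leaves the stated bound intact.
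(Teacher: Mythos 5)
Your proof is correct and follows essentially the same route as the paper's: the outer loop count is bounded by the makespan via Proposition~\ref{prop:offline:upper-bound}, the inner loop contributes a factor of $|A|$, and each agent's processing costs $O(\alpha+\beta)$. Your explicit remark about maintaining a reverse occupancy map to keep Line~\ref{algo:offline:if-occupied} constant-time is a detail the paper leaves implicit (it simply asserts the remaining operations are constant), but it does not change the argument.
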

\begin{proof}
  According to Proposition~\ref{prop:offline:upper-bound}, the makespan is $O(A\cdot\diam(G))$, i.e., the repetition number of Line~\ref{algo:offline:loop-start}--\ref{algo:offline:loop-end}.
  Each operation in Lines~\ref{algo:offline:start-for}--\ref{algo:offline:end-for} is constant except for Line~\ref{algo:offline:next}, $O(\alpha)$, and Lines~\ref{algo:offline:deadlock-detection}--\ref{algo:offline:rotation}, $O(\beta)$.
  Those operations repeat exactly $|A|$ times for each timestep, thus deriving the statement.
\end{proof}

\begin{proposition}
  The time complexity of Algorithm~\ref{algo:target-assignment}$^{(\dagger)}$ is $O(\max\left(A(V+E), A^4\right))$.
\end{proposition}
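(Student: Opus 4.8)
The plan is to split the running time into three independent pieces — the real-distance (breadth-first) evaluations triggered by the lazy scheme, the incremental bipartite matching that locates the bottleneck cost [Lines~\ref{algo:ta:start-loop}--\ref{algo:ta:end-bap}], and the optional minimum-cost maximum matching [Line~\ref{algo:ta:mincost-matching}] — to bound each one separately, and to take the maximum. Throughout I use the assumption $|\mathcal{S}|=|\mathcal{T}|=|A|=A$, so $\mathcal{B}$ always has $O(A)$ vertices and at most $A^2$ edges, and $\mathcal{Q}$ ever holds at most $A^2$ distinct start--target pairs.

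First I would handle the distance evaluations. Although Line~\ref{algo:ta:lazy-eval} requests a single value $\dist{s}{g}$, one breadth-first search rooted at $s$ produces the distance from $s$ to \emph{every} node, hence to every target, at cost $O(V+E)$; caching these results means at most one BFS is run per initial location. The crucial accounting point is therefore that the per-pair queries collapse to at most $A$ searches rather than $A^2$, so all real-distance evaluations together cost $O(A(V+E))$, which is exactly the first term of the max. Next I would bound the bottleneck phase. Every pair is pushed at most twice (once with $\bot$, once with its real distance), so the priority queue incurs $O(A^2)$ operations of $O(\log A)$ each, i.e.\ $O(A^2\log A)$, which is dominated. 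At most $A^2$ edges are ever added to $\mathcal{B}$ [Line~\ref{algo:ta:add-new-edge}], and each triggers exactly one augmenting-path search [Line~\ref{algo:ta:update-matching}] (the matching size increases at most once per addition, as the algorithm description notes); a single depth- or breadth-first augmenting-path search costs $O(V'+E')=O(A^2)$, so the matching updates total $O(A^2\cdot A^2)=O(A^4)$. The optional same-cost edge additions at Line~\ref{algo:ta:additional-edge} only reuse this machinery and add no new order of work.

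Finally I would treat the optional Line~\ref{algo:ta:mincost-matching}. Reducing to minimum-cost maximum flow and running the successive shortest path algorithm requires one iteration per unit of flow, i.e.\ $A$ iterations, and each iteration finds one shortest augmenting path on a residual graph with $O(A)$ vertices and $O(A^2)$ edges. A Bellman--Ford style relaxation costs $O(V'E')=O(A^3)$ per iteration, so this step totals $O(A^4)$ (Dijkstra with potentials would only improve this). Since this extra $\dagger$ work is itself $O(A^4)$, the same bound holds for both the plain and the $\dagger$ variants, justifying the $(\dagger)$ superscript. Summing the three contributions and discarding the dominated $O(A^2\log A)$ term yields $O(\max(A(V+E), A^4))$.

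The main obstacle I anticipate is precisely the bookkeeping of the lazy evaluation: one must argue that the scattered single-pair distance requests amortize to at most $A$ BFS invocations, since a naive per-pair BFS would inflate the first term to $O(A^2(V+E))$ and break the claimed bound. The two matching bounds are routine once the invariants ``at most $A^2$ edge additions, each with an $O(A^2)$ augmenting-path search'' and ``$A$ shortest-path augmentations of cost $O(A^3)$ each'' are stated explicitly.
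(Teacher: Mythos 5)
Your proposal is correct and follows essentially the same decomposition as the paper's proof: $O(A(V+E))$ for the amortized BFS distance evaluations (one search per initial location), $O(A^2\log A)$ for the dominated priority-queue operations, $O(A^4)$ for the up-to-$A^2$ edge insertions each followed by a single linear-time augmenting-path search, plus the optional minimum-cost matching. The only cosmetic difference is that the paper bounds the successive-shortest-path step by $O(A^3)$ via $O(f(E'+V'\log V'))$ rather than your $O(A^4)$ Bellman--Ford estimate, which does not affect the final bound.
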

\begin{proof}
  Consider the worst case, i.e., all start-target pairs are evaluated and contained in $\mathcal{B}$.
  The number of vertices and edges of $\mathcal{B}$ are $2|A|$ and $|A|^2$, respectively.
  Then, Line~\ref{algo:ta:mincost-matching} is $O(A^3)$ by the successive shortest path algorithm because its time complexity is $O(f(E^\prime + V^\prime\log{V^\prime}))$ where $f$ is the maximum flow size and $V^\prime$ and $E^\prime$ represent the network.
  The total operations of $\mathit{dist}$ becomes running the breadth-first search $|A|$ times, therefore, $O(A(V+E))$.
  Operations for a priority queue $\mathcal{Q}$ are both $O(\lg{n})$ for extracting and inserting, where $n$ is the length of the queue.
  Thus, the runtime of Line~\ref{algo:ta:setup} is $O(A^2\lg{A})$.
  The queue operations in Lines~\ref{algo:ta:start-loop}--\ref{algo:ta:end-bap} require $O(A^2\lg{A})$.
  Line~\ref{algo:ta:update-matching} finds a single augmenting path and this is linear for the number of edges in $\mathcal{B}$, thus, its complexity is $1+2+\cdots+|A^2| = O(A^4)$.
  As the result, the complexity of Alg.~\ref{algo:target-assignment} is;
  \begin{align*}
    &O(A(V+E)) &\text{finding shortest path}\\
    + &O(A^3) &\text{min-cost maximum matching}\\
    + &O(A^2\lg{A}) &\text{queue operations}\\
    + &O(A^4) &\text{update matching}
  \end{align*}
\end{proof}

\begin{proposition}
  Regardless of execution schedules, Online TSWAP has upper bounds of;
  \begin{itemize}
  \item maximum-moves: $O(|A|\cdot\diam(G))$
  \item sum-of-moves: $O(|A|\cdot\diam(G))$
  \end{itemize}
\end{proposition}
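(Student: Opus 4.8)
The plan is to reuse the potential function introduced for the offline upper bounds, namely
\[
  \psi \defeq \sum_{a \in A}\dist{a.v}{a.g},
\]
and to argue that its behaviour is determined \emph{entirely by the current configuration} (agent locations together with the current target assignment) and is therefore independent of the execution schedule. First I would record the initial bound: each summand satisfies $\dist{a.v}{a.g} \le \diam(G)$, so initially $\psi \le |A|\cdot\diam(G)$, i.e.\ $\psi = O(|A|\cdot\diam(G))$, and $\psi \ge 0$ always.

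Next I would examine how each atomic operation that Online TSWAP performs upon activating a single agent changes $\psi$; recall that the online phase simply runs the offline per-agent body (Lines~\ref{algo:offline:at-goal}--\ref{algo:offline:end-main}). An \emph{actual move} sets $a.v \leftarrow u$ with $u = \nextnode{a.v}{a.g}$, so $\dist{u}{a.g} = \dist{a.v}{a.g}-1$ and $\psi$ drops by exactly one. A \emph{target swap} between $a$ and $b$ leaves $\psi$ unchanged, since the decrease in $a$'s term is exactly compensated by the matching increase in $b$'s term. A \emph{rotation} over a deadlocked cycle $A'$ strictly decreases $\psi$ (by $|A'|$): because $\nextnode{a_{ik}.v}{a_{ik}.g}$ is the location of the next agent in the cycle, the distance each agent inherits after reassignment is one shorter. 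Crucially, these three facts are precisely the ones already established in the offline analysis, and each depends only on the configuration, not on \emph{when} or \emph{in what order} agents are activated; hence $\psi$ is non-increasing under \emph{any} execution schedule.

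Finally I would close the counting argument. Since $\psi$ is non-negative, non-increasing, and strictly decreases by one on every actual move (Line~\ref{algo:online:move}) while never increasing, the total number of move actions over the whole run is at most the initial value of $\psi$. This total, summed over all agents, is the sum-of-moves, giving sum-of-moves $= O(|A|\cdot\diam(G))$; since maximum-moves is bounded above by sum-of-moves, the same bound follows for it immediately.

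The main obstacle, such as it is, lies not in the counting but in justifying schedule-independence: one must confirm that no activation order can cause $\psi$ to increase. The two delicate cases are the swap (which must \emph{exactly} preserve $\psi$) and the rotation (which must not increase it), but both reduce to the same local distance identities already verified for the offline version, so they transfer without extra work. Notably, the fairness of the schedule — needed for \emph{completeness} in Theorem~\ref{thrm:online} — is not required here: the move bound holds regardless of whether termination is ever reached.
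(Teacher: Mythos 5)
Your proposal is correct and follows essentially the same route as the paper: the appendix proves the online bounds by reusing the offline argument, i.e., the potential $\psi=\sum_{a\in A}\mathrm{dist}(a.v,a.g)$, which starts at $O(|A|\cdot\mathrm{diam}(G))$, is non-increasing, and drops by one on every actual move, bounding sum-of-moves and hence maximum-moves. Your only addition is to spell out explicitly why swaps preserve and rotations decrease $\psi$ independently of the activation order, which the paper leaves implicit in its one-line "same proof applies."
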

\begin{proof}
  The same proof of Proposition~\ref{prop:offline:upper-bound} is applied.
\end{proof}

\begin{proposition}
  The time complexity of Algorithm~\ref{algo:target-greedy} is $O(A(V+E))$.
\end{proposition}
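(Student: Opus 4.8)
The plan is to identify the breadth-first searches used for lazy distance evaluation as the dominant cost and to show that every other operation is subsumed into the $O(A(V+E))$ bound they contribute. First I would observe that each start $s_i$ owns a single, incrementally advanced breadth-first search: a query for ``the non-evaluated nearest target'' [Line~\ref{algo:greedy:obtain-g}] merely resumes this search, and a query for $\dist{s_i}{g}$ in the refinement phase either reads a cached value in $O(1)$ or advances the same search. Since each search expands every node and edge at most once, the total work across all $|A|$ searches is $O(A(V+E))$, and all subsequent distance look-ups are $O(1)$.

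Next I would bound the combinatorial overhead of the greedy phase [Lines~\ref{algo:greedy:init}--\ref{algo:greedy:fin-init-assign}] that is \emph{not} already charged to the searches. Because each start's search is monotone in distance, the targets an agent tries form a strictly increasing sequence, so an agent is displaced and re-enqueued at most $|A|$ times; hence the queue is popped $O(A^2)$ times, each pop costing $O(1)$ beyond the search it drives. As $|V| \geq |A|$ forces $A(V+E) = \Omega(A^2)$, this overhead is absorbed by the search bound.

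The hard part will be the refinement loop [Lines~\ref{algo:greedy:start-refine}--\ref{algo:greedy:end-refine}], whose per-iteration cost is $O(A)$ for the $\argmax$ [Line~\ref{algo:greedy:argmax}] together with the swap-search over $\mathcal{M}$, the $h$-guard making each skipped pair $O(1)$ and each tested distance $O(1)$ after caching. To control the \emph{number} of iterations I would use the bottleneck cost $c_{\mathrm{now}}$ as a potential: every executed swap forces both resulting pairs strictly below $c_{\mathrm{now}}$, so the selected maximum pair is retired and the multiset of pair-costs decreases in the lexicographic, largest-first order. Since costs are integers bounded by $\diam(G)$ and the running maximum is non-increasing, the count of pairs attaining that maximum strictly drops until the maximum itself descends, which I would convert into a bound on the total number of swaps. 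The delicate point is ensuring this does not force an $A\cdot\diam(G)$ blow-up; I expect to argue either that the refinement work stays dominated by the $O(A(V+E))$ search term under $|V|\ge|A|$, or to charge each swap against a decrement of the cached distance it consumes so the phase stays within budget. Summing the three contributions then yields $O(A(V+E))$.
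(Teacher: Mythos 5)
Your handling of the lazy breadth-first searches and of the greedy phase matches the paper's: one search per start for $O(A(V+E))$ total, plus $O(A^2)$ combinatorial overhead that is absorbed because $|A|\le|V|$. The genuine gap is in the refinement loop [Lines~\ref{algo:greedy:start-refine}--\ref{algo:greedy:end-refine}], which you explicitly leave open. Your potential --- the largest-first lexicographic order on the multiset of pair costs, tracking how many pairs attain the running maximum --- only caps the number of swaps at $O(A\cdot\diam(G))$, so at $O(A)$ work per iteration you get $O(A^2\cdot\diam(G))$, and neither of your proposed escapes closes this. In particular, $A^2\cdot\diam(G)$ is \emph{not} dominated by $A(V+E)$ merely because $|V|\ge|A|$: on a path graph with $|A|=|V|/2$ agents one has $A^2\cdot\diam(G)=\Theta(V^3)$ while $A(V+E)=\Theta(V^2)$. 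The alternative of charging each swap to a decrement of a cached distance is not developed, and it is unclear what quantity would decrease, since a swap can increase the partner's cost.

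The paper closes this step with a stronger claim: each iteration of the outer while-loop strictly reduces $\max_{(s,g)\in\mathcal{M}}\dist{s}{g}$, so the loop runs at most $\diam(G)$ times and the refinement phase costs only $O(A\cdot\diam(G))=O(AV)$, which together with the $O(A(V+E))$ search term and the $O(A^2)$ greedy term gives the stated bound. That is precisely the assumption your tie-counting potential was designed to avoid: if several pairs are tied at the maximum, one swap retires only the selected pair and the overall maximum need not drop --- the scenario your $A\cdot\diam(G)$ bound accounts for and the paper's one-line justification does not address. To complete your proof along the paper's lines you must either argue that ties cannot force more than $\diam(G)$ productive iterations (i.e., establish that the maximum itself strictly decreases per iteration), or accept the weaker iteration bound and obtain only $O(A(V+E)+A^2\cdot\diam(G))$. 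As written, the proposal does not establish the claimed $O(A(V+E))$ bound.
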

\begin{proof}
  In the worst case, the algorithm requires $O(A(V+E))$ in total to evaluate all distances of start-target pairs by running the breadth-first search $|A|$ times.
  The operations in Lines~\ref{algo:greedy:obtain-g}--\ref{algo:greedy:fin-operation} repeat at most $|A|^2$ times because each target for each agent is evaluated at most once.
  Lines~\ref{algo:greedy:start-refine}--\ref{algo:greedy:end-refine} repeat at most $\diam(G)$ times because each iteration must reduce the maximum cost of the assignment $\mathcal{M}$.
  Both Line~\ref{algo:greedy:argmax} and Lines~\ref{algo:greedy:start-try-swap}--\ref{algo:greedy:fin-try-swap} are $O(A)$.
  As a result, the algorithm is $O(A(V+E) + A^2 + A\cdot\diam(G))$, which equals to $O(A(V+E))$.
\end{proof}

We additionally present the correctness of Alg.~\ref{algo:target-greedy};
\begin{proposition}
  Algorithm~\ref{algo:target-greedy} is correct; returns a distinct target for each agent.
  \label{prop:correctness-greedy}
\end{proposition}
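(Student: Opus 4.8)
The plan is to establish two structural facts—an invariant guaranteeing that $\mathcal{M}$ is always a partial matching with distinct targets, and termination of both loops—after which a short counting argument closes the statement. First I would prove the invariant: throughout the greedy phase [Lines~\ref{algo:greedy:init}--\ref{algo:greedy:fin-init-assign}], every target appears in at most one pair of $\mathcal{M}$ and every agent appears in at most one pair. This follows by inspecting the only two lines that modify $\mathcal{M}$: a pair $(s_i, g)$ is inserted only when $g$ is currently unassigned (the first branch), and the replacement branch [Line~\ref{algo:greedy:eval-pair}] removes $(s_j, g)$ before inserting $(s_i, g)$, so $g$ never becomes doubly assigned; since an agent is processed only while it is absent from $\mathcal{M}$, no agent is doubly assigned either.

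Next I would argue that the inner \textbf{while} loop always terminates by assigning $a_i$. When $a_i$ is popped it holds no pair in $\mathcal{M}$, so by the invariant $|\mathcal{M}| \leq |A| - 1 = |\mathcal{T}| - 1$, and hence at least one target is free. The loop draws targets from $s_i$ in nondecreasing distance [Line~\ref{algo:greedy:obtain-g}]; it breaks as soon as it meets a free target (first branch) or a target held by a strictly farther agent (second branch). In the worst case it reaches the free target and claims it, so the loop cannot exhaust the targets without assigning $a_i$.

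The crux—and the step I expect to be the main obstacle—is termination of the outer loop, i.e.\ showing that $\mathcal{U}$ eventually empties despite agents being pushed back into it during the replacement branch. Here I would lean on the lazy breadth-first search: the phrase ``non-evaluated nearest target'' means each agent retains its search progress across reactivations, so each agent--target pair is evaluated at most once over the whole run (this is exactly the bound invoked in Proposition~\ref{prop:complexity-greedy}). Consequently each processing of an agent consumes at least one fresh target evaluation and, by the previous paragraph, ends with an assignment; hence each agent is processed at most $|\mathcal{T}|$ times and the total number of processings is at most $|A| \cdot |\mathcal{T}|$. An agent re-enters $\mathcal{U}$ only when it loses its target, which can therefore happen only finitely often, so $\mathcal{U}$ empties.

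Finally I would read off correctness. An agent leaves $\mathcal{U}$ only once assigned and re-enters only once unassigned, so when $\mathcal{U} = \emptyset$ every agent holds exactly one pair; combined with the distinct-target invariant and the assumption $|A| = |\mathcal{T}|$, $\mathcal{M}$ is a perfect matching that assigns a distinct target to each agent. The refinement phase [Lines~\ref{algo:greedy:start-refine}--\ref{algo:greedy:end-refine}] only swaps the targets of two already-matched agents, which preserves the bijection, so the returned $\mathcal{M}$ still gives every agent a distinct target, which is what the proposition asserts.
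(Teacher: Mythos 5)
Your proof is correct and follows essentially the same route as the paper's: the distinct-target invariant by inspection of the two lines that modify $\mathcal{M}$, the counting invariant $|\mathcal{U}|+|\mathcal{M}|=|A|$ (which you phrase as agents leaving $\mathcal{U}$ exactly when assigned and re-entering exactly when unassigned), termination from each agent--target pair being evaluated at most once, and the observation that the refinement phase only permutes an existing matching. The one step both your argument and the paper's leave implicit is that a target, once placed in $\mathcal{M}$, never becomes unassigned during the greedy phase; this is what guarantees that the free target whose existence you derive from the counting bound is still among the \emph{non-evaluated} targets of a reactivated agent, so Line~\ref{algo:greedy:obtain-g} can actually return it.
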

\begin{proof}
  We focus on the initial assignment phase [Lines~\ref{algo:greedy:init}--\ref{algo:greedy:fin-init-assign}] because the refinement phase [Lines~\ref{algo:greedy:start-refine}--\ref{algo:greedy:end-refine}] only swaps targets for the existing assignment $\mathcal{M}$ and terminates within finite iterations (see Prop.~\ref{prop:complexity-greedy}).
  Trivially, each assignment operation never assigns one target to more than one agents.
  Observe that $|\mathcal{U}| + |\mathcal{M}| = |A|$ is invariant.
  Thus, an output of the algorithm is correct.

  The termination of the algorithm is derived by contradiction.
  Assume the invalid state of the algorithm; Line~\ref{algo:greedy:obtain-g} does not have any corresponding values, meaning that all targets have already been evaluated.
  This violates the invariance of $|\mathcal{U}| + |\mathcal{M}| = |A|$; hence such states never be realized.
  For each agent, no target is evaluated more than once.
  Therefore, the algorithm eventually terminates.
\end{proof}

\begin{table*}[t!]
  \centering
  {
    \newcommand{\cmid}{\cmidrule(lr){1-18}}
    \newcommand{\w}[1]{\textbf{#1}}  
    \renewcommand{\c}[1]{{\tiny(#1)}}
    \setlength{\tabcolsep}{2pt}
    \scriptsize
    {\normalsize\textit{random-64-64-20}}\\\smallskip
    \begin{tabular}{rrrlrlrlrlrlrlrlrl}
      \toprule
      $|A|$
      & metric
      & \multicolumn{2}{c}{Alg.~\ref{algo:target-assignment}}
      & \multicolumn{2}{c}{Alg.~\ref{algo:target-assignment}$^\dagger$}
      & \multicolumn{2}{c}{Alg.~\ref{algo:target-assignment}$^{\dagger\ast}$}
      & \multicolumn{2}{c}{Alg.~\ref{algo:target-greedy}}
      & \multicolumn{2}{c}{Alg.~\ref{algo:target-greedy}$^\ast$}
      & \multicolumn{2}{c}{Alg.~\ref{algo:target-greedy-soc}}
      & \multicolumn{2}{c}{greedy}
      & \multicolumn{2}{c}{linear}
      \\ \midrule
      \multirow{3}{*}{110}
      & runtime (ms) & 6 & \c{6,7} & 10 & \c{10,11} & 17 & \c{17,17} & \w{4} & \c{4,4} & 12 & \c{12,12} & \w{4} & \c{4,4} & 12 & \c{12,12} & 23 & \c{22,23} \\
      & makespan & \w{17} & \c{17,18} & \w{17} & \c{17,18} & \w{17} & \c{17,18} & 20 & \c{20,21} & 20 & \c{20,21} & 36 & \c{33,39} & 84& \c{79,88} & 36& \c{34,39} \\
      & sum-of-costs & 1079 & \c{1036,1119} & \w{937} & \c{890,972} & \w{937} & \c{890,972} & 1139 & \c{1094,1184} & 1136 & \c{1091,1180} & \w{958} & \c{919,995} & 1396 & \c{1320,1470} & \w{940} & \c{900,980} \\ \cmid
      \multirow{3}{*}{500}
      & runtime (ms) & 72 & \c{64,79} & 174 & \c{164,184} & 213 & \c{203,222} & \w{15} & \c{15,16} & 78 & \c{77,78} & 22 & \c{21,23} & 77 & \c{77,78} & 602 & \c{595,610}\\
      & makespan & \w{10} & \c{10,11} & \w{11} & \c{10,11} & \w{11} & \c{10,11} & 13 & \c{12,13} & 13 & \c{12,13} & 34 & \c{31,36} & 79 & \c{75,82} & 32 & \c{29,35} \\
      & sum-of-costs & 2595 & \c{2497,2690} & \w{2169} & \c{2084,2252} & \w{2169} & \c{2083,2251} & 2878 & \c{2761,2993} & 2860 & \c{2749,2969} & 2546 & \c{2432,2653} & 4653 & \c{4394,4908} & 2429 & \c{2306,2550} \\ \cmid
      \multirow{3}{*}{1000}
      & runtime (ms) & 335 & \c{281,383} & 757 & \c{692,813} & 882 & \c{814,941} & \w{28} & \c{26,29} & 233 & \c{232,234} & 58 & \c{56,60} & 221 & \c{220,222} & 3811 & \c{3801,3120}\\
      & makespan & \w{9} & \c{8,9} & \w{9} & \c{9,9} & \w{9} & \c{9,9} & 11 & \c{10,11} & 11 & \c{10,11} & 28 & \c{26,31} & 71 & \c{67,74} & 26 & \c{23,28} \\
      & sum-of-costs & 3591 & \c{3448,3728} & \w{2922} & \c{2812,3030} & \w{2922} & \c{2811,3032} & 4020 & \c{3855,4179} & 4043 & \c{3874,4212} & 3695 & \c{3533,3857} & 7571 & \c{7112,8011} & 3491 & \c{3306,3669} \\ \cmid
      \multirow{3}{*}{2000}
      & runtime (ms) & 1453 & \c{1258,1627} & 3035 & \c{2814,3233} & 3205 & \c{2994,3402} & \w{66} & \c{62,70} & 784 & \c{781,787} & 188 & \c{182,194} & 725 & \c{722,727} & 32944 & \c{32805,33082}\\
      & makespan & \w{8} & \c{7,8} & \w{7} & \c{7,7} & \w{7} & \c{7,7} & 10 & \c{10,11} & 10 & \c{10,11} & 24 & \c{22,26} & 56 & \c{53,59} & 23 & \c{21,25} \\
      & sum-of-costs & 4670 & \c{4471,4862} & \w{3469} & \c{3311,3620} & \w{3469} & \c{3310,3622} & 5200 & \c{4955,5435} & 5244 & \c{5001,5471} & 5465 & \c{5112,5796} & 12292 & \c{11357,13145} & 5122 & \c{4750,5479} \\ \bottomrule
    \end{tabular}\\\bigskip
    \setlength{\tabcolsep}{0.9pt}
    {\normalsize\textit{lak303d}}\\\smallskip
    \begin{tabular}{rrrlrlrlrlrlrlrlrl}
      \toprule
      $|A|$
      & metric
      & \multicolumn{2}{c}{Alg.~\ref{algo:target-assignment}}
      & \multicolumn{2}{c}{Alg.~\ref{algo:target-assignment}$^\dagger$}
      & \multicolumn{2}{c}{Alg.~\ref{algo:target-assignment}$^{\dagger\ast}$}
      & \multicolumn{2}{c}{Alg.~\ref{algo:target-greedy}}
      & \multicolumn{2}{c}{Alg.~\ref{algo:target-greedy}$^\ast$}
      & \multicolumn{2}{c}{Alg.~\ref{algo:target-greedy-soc}}
      & \multicolumn{2}{c}{greedy}
      & \multicolumn{2}{c}{linear}
      \\ \midrule
      \multirow{3}{*}{100}
      & runtime (ms) & 42 & \c{39,44} & 49 & \c{46,51} & 51 & \c{50,53} & \w{28} & \c{26,30} & 46 & \c{44,47} & 35 & \c{31,39} & 56 & \c{54,59} & 56 & \c{53,58} \\
      & makespan & \w{89} & \c{83,94} & \w{89} & \c{83,94} & \w{89} & \c{83,94} & \w{89} & \c{83,95} & \w{89} & \c{83,95} & 229 & \c{204,253} & 387 & \c{363,412} & 242 & \c{217,267} \\
      & sum-of-costs & 3834 & \c{3540,4108} & \w{3264} & \c{3034,3483} & \w{3264} & \c{3032,3483} & 3934 & \c{3650,4204} & 3930 & \c{3635,4205} & \w{3460} & \c{3184,3712} & 4561 & \c{4250,4862} & \w{3511} & \c{3239,3760} \\ \cmid
      \multirow{3}{*}{500}
      & runtime (ms) & 604 & \c{503,690} & 855 & \c{741,958} & 863 & \c{760,954} & \w{105} & \c{98,112} & 234 & \c{230,238} & 158 & \c{145,171} & 276 & \c{267,285} & 758 & \c{748,767} \\
      & makespan & \w{60} & \c{56,65} & \w{61} & \c{56,66} & \w{61} & \c{56,66} & \w{61} & \c{56,66} & \w{61} & \c{56,66} & 264 & \c{236,291} & 425 & \c{404,448} & 287 & \c{257,316} \\
      & sum-of-costs & 11148 & \c{10122,12060} & \w{9236} & \c{8410,9964} & \w{9236} & \c{8439,9972} & 12488 & \c{11404,13479} & 12517 & \c{11435,13552} & 11594 & \c{10290,12819} & 17989 & \c{16654,19273} & 12228 & \c{10903,13498} \\ \cmid
      \multirow{3}{*}{1000}
      & runtime (ms) & 3406 & \c{2894,3886} & 4529 & \c{3956,5081} & 4618 & \c{4051,5142} & \w{174} & \c{163,184} & 546 & \c{539,552} & 286 & \c{266,306} & 570 & \c{560,580} & 4081 & \c{4064,4097} \\
      & makespan & \w{46} & \c{43,49} & \w{47} & \c{44,50} & \w{47} & \c{44,50} & \w{46} & \c{43,50} & \w{47} & \c{43,50} & 215 & \c{192,237} & 395 & \c{375,416} & 240 & \c{215,264} \\
      & sum-of-costs & 14353 & \c{13274,15323} & \w{11909} & \c{11059,12690} & \w{11909} & \c{11051,12708} & 16954 & \c{15645,18091} & 17009 & \c{15727,18136} & 15492 & \c{14003,16864} & 28783 & \c{26794,30555} & 16355 & \c{14777,17839} \\ \cmid
      \multirow{3}{*}{2000}
      & runtime (ms) & 25402 & \c{20525,29840} & 31470 & \c{26244,36250} & 31630 & \c{26506,36466} & \w{393} & \c{359,424} & 1536 & \c{1508,1560} & 624 & \c{589,658} & 1400 & \c{1384,1415} & 33813 & \c{33724,33902} \\
      & makespan & \w{47} & \c{42,51} & \w{48} & \c{43,53} & \w{48} & \c{43,53} & \w{48} & \c{43,52} & \w{47} & \c{43,52} & 184 & \c{164,204} & 373 & \c{355,391} & 212 & \c{189,235} \\
      & sum-of-costs & 21200 & \c{19523,22694} & \w{18176} & \c{16783,19498} & \w{18176} & \c{16759,19494} & 26806 & \c{24760,28786} & 26706 & \c{24611,28660} & 24127 & \c{21761,26297} & 50388 & \c{47276,53360} & 25961 & \c{23366,28322} \\ \bottomrule
    \end{tabular}

  }
  \caption{\textbf{The detailed results of TSWAP with different assignment algorithms.}
    We also display 95\% confidence intervals of the mean, on which bold characters are based.
  }
  \label{table:result-assignment-details}
\end{table*}

\section{Refinement for Sum-of-costs}
The sum-of-costs version of Alg.~\ref{algo:target-greedy} is presented in Alg.~\ref{algo:target-greedy-soc}.

\begin{algorithm}[tb]
  \caption{\textbf{Refinement for Sum-of-costs}}
  \label{algo:target-greedy-soc}
  {\small
  \begin{algorithmic}[1]
    \item[\textbf{input}:~unlabeled-MAPF instance]
    \item[\textbf{output}:~$\mathcal{M}$: assignment, a set of pairs $s \in \mathcal{S}$ and $g \in \mathcal{T}$]
      \STATE execute Lines~\ref{algo:greedy:init}--\ref{algo:greedy:fin-init-assign} of Alg.~\ref{algo:target-greedy}
      \smallskip
      \WHILE{$\mathcal{M}$ is updated in the last iteration}
      \label{algo:greedy-soc:while}
      \FOR{$(s_i, s_i)\in\mathcal{M}, (s_j, g_j)\in\mathcal{M}, i\neq j$}
      \STATE $c_{\text{now}} \leftarrow \dist{s_i}{g_i} + \dist{s_j}{g_j}$
      \IFSINGLE{$h(s_j, g_i) + h(s_i, g_j) \geq c_{\text{now}}$}{\textbf{continue}}
      \STATE $c_{\text{swap}} \leftarrow \dist{s_j}{g_i} + \dist{s_i}{g_j}$
      \IFSINGLE{$c_{\text{swap}} < c_{\text{now}}$}{swap $g_i$ and $g_j$ of $\mathcal{M}$; \textbf{break}}
      \ENDFOR
      \ENDWHILE
      \label{algo:greedy-soc:end-while}
  \end{algorithmic}
  }
\end{algorithm}

\begin{proposition}
  The time complexity of Algorithm~\ref{algo:target-greedy-soc} is $O(A(V+E) + A^3\cdot\diam(G))$.
\end{proposition}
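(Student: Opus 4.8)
The plan is to separate the running time into two independent contributions: the cost of evaluating shortest-path distances, and the cost of the comparison/swap bookkeeping. The first line invokes the greedy assignment phase [Lines~\ref{algo:greedy:init}--\ref{algo:greedy:fin-init-assign} of Alg.~\ref{algo:target-greedy}], whose cost is $O(A(V+E))$ by Proposition~\ref{prop:complexity-greedy}; this already absorbs the $A(V+E)$ term. So the remaining task is to bound the refinement loop [Lines~\ref{algo:greedy-soc:while}--\ref{algo:greedy-soc:end-while}] and show its bookkeeping contributes $O(A^3\cdot\diam(G))$.

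For the distance evaluations, I would argue that, although $\dist{\cdot}{\cdot}$ is queried lazily inside the loop, each query is answered by at most one breadth-first search from a start vertex, and each such search yields the distances from that start to \emph{all} targets at once. Hence at most $|A|$ searches are ever run over the whole execution, for a total of $O(A(V+E))$; with caching, every distance is computed at most once, so these evaluations fold into the existing $A(V+E)$ term and never dominate the loop bookkeeping.

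The heart of the argument is bounding the number of iterations of the outer \textbf{while} loop. Here I would use the total $\sum_{(s,g)\in\mathcal{M}}\dist{s}{g}$ as a monotone potential. A swap is performed only when $c_{\text{swap}} < c_{\text{now}}$, and since the two swapped pairs are the only pairs whose assignment changes, the potential strictly decreases by $c_{\text{now}} - c_{\text{swap}} \ge 1$ (shortest-path lengths are integers). The initial potential is at most $|A|\cdot\diam(G)$, so at most $O(A\cdot\diam(G))$ passes perform a swap; adding the single terminating pass, the loop runs $O(A\cdot\diam(G))$ times. Each pass scans all ordered pairs $(s_i,g_i),(s_j,g_j)$, i.e.\ $O(A^2)$ pairs, and---since all distances are cached---each pair costs $O(1)$ (two look-ups, the heuristic test, and a possible swap). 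Multiplying gives $O(A\cdot\diam(G))\cdot O(A^2) = O(A^3\cdot\diam(G))$ for the refinement bookkeeping.

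Combining the two contributions yields $O(A(V+E)) + O(A^3\cdot\diam(G)) = O\!\left(A(V+E) + A^3\cdot\diam(G)\right)$, as claimed. The main obstacle is the iteration bound for the \textbf{while} loop: one must verify both that sum-of-costs is the correct potential---each accepted swap strictly and integrally decreases it---and that it is initially $O(A\cdot\diam(G))$, which is what caps the number of refinement rounds. The per-pair $O(1)$ cost then hinges on cleanly separating the one-time BFS distance cost from the repeated loop bookkeeping, so that look-ups inside the loop are genuinely constant.
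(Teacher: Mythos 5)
Your proposal is correct and follows essentially the same route as the paper: the refinement loop runs at most $O(A\cdot\diam(G))$ times because each accepted swap integrally decreases the total assignment cost, which starts at no more than $|A|\cdot\diam(G)$; each pass costs $O(A^2)$; and the distance evaluations fold into the $O(A(V+E))$ already charged to the greedy phase of Alg.~\ref{algo:target-greedy}. The paper states these three facts without elaboration, so your write-up simply supplies the justifications the paper leaves implicit.
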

\begin{proof}
  Lines~\ref{algo:greedy-soc:while}--\ref{algo:greedy-soc:end-while} repeat at most $|A|\cdot\diam(G)$.
  Each iteration requires $O(A^2)$.
  Together with the proof of Proposition~\ref{prop:complexity-greedy}, we derive the statement.
\end{proof}

\section{Further Evaluation of Effect of Initial Target Assignment}

Table~\ref{table:result-assignment-details} presents further details of Table~\ref{table:result-assignment} and an additional result of another map $\textit{lak303d}$.
The main observations are the same as described in Sec.~\ref{sec:eval:assign}.

\section{Implementation of Offline TSWAP}

We use a priority queue with re-insert operations for agents instead of a simple list for Line~\ref{algo:offline:start-for}--\ref{algo:offline:end-for} because the ordering of a list affects results.
To observe this, consider a line graph with two adjacent agents on the left side.
Their targets are on the right side.
With a simple list implementation, when the left agent plans prior to the right one, the left agent has to wait for one timestep until the right agent has moved.
In the reverse case, this wait action never happens.
Therefore, we avoid such wasteful wait actions by using the priority queue with re-insert operations.

\section{Implementation of the Polynomial-Time Makespan-Optimal Algorithm}
In our experiment, we used the polynomial-time makespan-optimal algorithm~\cite{yu2013multi}.
This algorithm has several techniques to improve the runtime performance.
All of them are straightforward, however, \emph{their quantitative evaluation has not been performed to our knowledge}.
Thus, we evaluated them and selected the best one for each experimental setting.
This section describes the details.

\subsection{Preliminaries --- Algorithm Description}
\label{subsec:flow-algo}
Given a timestep $T$, a decision problem of whether an unlabeled-MAPF instance has a solution with makespan $T$ can be solved in polynomial time.
This is achieved by a reduction to maximum flow problems on a large graph called \emph{time expanded network}~\cite{yu2013multi}.
\footnote{
  We slightly change the structure of the network in the original paper to make the network slim, i.e., removing internal two vertices for preventing swap conflicts. In the unlabeled setting, plans with swap conflicts can be easily converted to plans without conflicts. This technique is used in~\cite{ma2016multi,liu2019task}.
}
Let denote \network{T} be the time expanded network for makespan $T$.
To clarify the context, we use ``vertices'' for the network \network{T} and ``nodes'' for the original graph $G$.

For each timestep $0 \leq t < T$ and each node $v \in V$, the network \network{T} has two vertices $v^{t}_\text{in}$ and $v^{t}_\text{out}$.
In addition, there are two special vertices \emph{source} and \emph{sink} to convert the unlabeled-MAPF instance to the maximum flow problem.
\network{T} has five types of edges with a unit capacity.
The intuitions are the following.
\begin{itemize}
  \setlength{\itemsep}{0cm}
\item $(v^t_\text{in}, v^t_\text{out})$: An agent can stay at $v$ during $[t, t+1]$.
\item $(u^t_\text{in}, v^t_\text{out})$ if $(u, v) \in E$: An agent can move from $u$ to $v$ during $[t, t+1]$.
\item $(v^t_\text{out}, v^{t+1}_\text{in})$: Prevent vertex conflicts.
\item $(\text{source}, v^0_\text{in})$ if $v \in \mathcal{S}$: Initial locations.
\item $(v^{T-1}_\text{out}, \text{sink})$ if $v \in \mathcal{G}$: Targets.
\end{itemize}
We show an example of time expanded networks in Fig.~\ref{fig:time-expanded-network} with the maximum flows.
Once the maximum flow with size equals to $|A|$ is obtained, the solution for the unlabeled-MAPF instance is easily obtained from the flow.

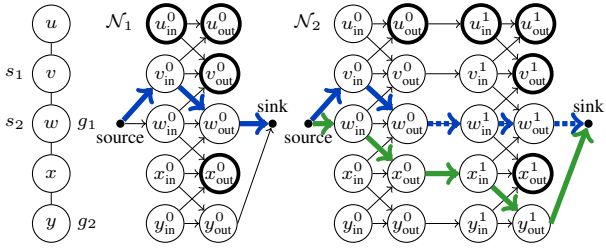
\begin{figure}[t]
  \centering
  \begin{tikzpicture}
    \newcommand{\betweenvertex}{0.15 cm}
    \newcommand{\rightmarginfrominvertex}{0.45 cm}
    \newcommand{\rightmarginfromoutvertex}{0.7 cm}
    \scriptsize
    \node[vertex](v1) at (0, 0) {$y$};
    \node[vertex,above=\betweenvertex of v1](v2) {$x$};
    \node[vertex,above=\betweenvertex of v2](v3) {$w$};
    \node[vertex,above=\betweenvertex of v3](v4) {$v$};
    \node[vertex,above=\betweenvertex of v4](v5) {$u$};
    \foreach \u / \v in {v1/v2, v2/v3, v3/v4, v4/v5} \draw[edge] (\u) -- (\v);
    \node[left=0cm of v4]() {$s_1$};
    \node[left=0cm of v3]() {$s_2$};
    \node[right=0cm of v1]() {$g_2$};
    \node[right=0cm of v3]() {$g_1$};
    %
    {
      \newcommand{\rightmarginfromoriginal}{1.0 cm}
      \node[vertex-small,label=below:{source},right=0.6cm of v3](source) {};
      \node[above=1.1cm of source]() {\network{1}};
      \node[vertex-large,right=\rightmarginfromoriginal of v1](v1_0_in) {$y^0_{\text{in}}$};
      \node[vertex-large,right=\rightmarginfromoriginal of v2](v2_0_in) {$x^0_{\text{in}}$};
      \node[vertex-large,right=\rightmarginfromoriginal of v3](v3_0_in) {$w^0_{\text{in}}$};
      \node[vertex-large,right=\rightmarginfromoriginal of v4](v4_0_in) {$v^0_{\text{in}}$};
      \node[vertex-large-dead,right=\rightmarginfromoriginal of v5](v5_0_in) {$u^0_{\text{in}}$};
      \node[vertex-large,right=\rightmarginfrominvertex of v1_0_in.center](v1_0_out) {$y^0_{\text{out}}$};
      \node[vertex-large-dead,right=\rightmarginfrominvertex of v2_0_in.center](v2_0_out) {$x^0_{\text{out}}$};
      \node[vertex-large,right=\rightmarginfrominvertex of v3_0_in.center](v3_0_out) {$w^0_{\text{out}}$};
      \node[vertex-large-dead,right=\rightmarginfrominvertex of v4_0_in.center](v4_0_out) {$v^0_{\text{out}}$};
      \node[vertex-large-dead,right=\rightmarginfrominvertex of v5_0_in.center](v5_0_out) {$u^0_{\text{out}}$};
      \node[vertex-small,label=above:{sink},right=1.9 cm of source](sink) {};
      \foreach \u / \v in {source/v4_0_in,source/v3_0_in,
        v1_0_in/v1_0_out,v2_0_in/v2_0_out,v3_0_in/v3_0_out,v4_0_in/v4_0_out,v5_0_in/v5_0_out,
        v1_0_in/v2_0_out,v2_0_in/v1_0_out,v2_0_in/v3_0_out,v3_0_in/v2_0_out,v3_0_in/v4_0_out,v4_0_in/v3_0_out,v4_0_in/v5_0_out,v5_0_in/v4_0_out,
        v1_0_out.east/sink,v3_0_out/sink}
      \draw[edge,->] (\u) -- (\v);
      %
      \foreach \u / \v in {source/v4_0_in,v4_0_in/v3_0_out,v3_0_out/sink}
      \draw[flow,color={rgb:red,0;green,1;blue,3}] (\u) -- (\v);
    }
    %
    {
      \newcommand{\rightmarginfromoriginal}{3.5 cm}
      \node[vertex-small,label=below:{source},right=3.1cm of v3](source){};
      \node[above=1.1cm of source]() {\network{2}};
      \node[vertex-large,right=\rightmarginfromoriginal of v1](v1_0_in) {$y^0_{\text{in}}$};
      \node[vertex-large,right=\rightmarginfromoriginal of v2](v2_0_in) {$x^0_{\text{in}}$};
      \node[vertex-large,right=\rightmarginfromoriginal of v3](v3_0_in) {$w^0_{\text{in}}$};
      \node[vertex-large,right=\rightmarginfromoriginal of v4](v4_0_in) {$v^0_{\text{in}}$};
      \node[vertex-large,right=\rightmarginfromoriginal of v5](v5_0_in) {$u^0_{\text{in}}$};
      \node[vertex-large,right=\rightmarginfrominvertex of v1_0_in.center](v1_0_out) {$y^0_{\text{out}}$};
      \node[vertex-large,right=\rightmarginfrominvertex of v2_0_in.center](v2_0_out) {$x^0_{\text{out}}$};
      \node[vertex-large,right=\rightmarginfrominvertex of v3_0_in.center](v3_0_out) {$w^0_{\text{out}}$};
      \node[vertex-large,right=\rightmarginfrominvertex of v4_0_in.center](v4_0_out) {$v^0_{\text{out}}$};
      \node[vertex-large-dead,right=\rightmarginfrominvertex of v5_0_in.center](v5_0_out) {$u^0_{\text{out}}$};
      \node[vertex-large,right=\rightmarginfromoutvertex of v1_0_out.center](v1_1_in) {$y^1_{\text{in}}$};
      \node[vertex-large,right=\rightmarginfromoutvertex of v2_0_out.center](v2_1_in) {$x^1_{\text{in}}$};
      \node[vertex-large,right=\rightmarginfromoutvertex of v3_0_out.center](v3_1_in) {$w^1_{\text{in}}$};
      \node[vertex-large,right=\rightmarginfromoutvertex of v4_0_out.center](v4_1_in) {$v^1_{\text{in}}$};
      \node[vertex-large-dead,right=\rightmarginfromoutvertex of v5_0_out.center](v5_1_in) {$u^1_{\text{in}}$};
      \node[vertex-large,right=\rightmarginfrominvertex of v1_1_in.center](v1_1_out) {$y^1_{\text{out}}$};
      \node[vertex-large-dead,right=\rightmarginfrominvertex of v2_1_in.center](v2_1_out) {$x^1_{\text{out}}$};
      \node[vertex-large,right=\rightmarginfrominvertex of v3_1_in.center](v3_1_out) {$w^1_{\text{out}}$};
      \node[vertex-large-dead,right=\rightmarginfrominvertex of v4_1_in.center](v4_1_out) {$v^1_{\text{out}}$};
      \node[vertex-large-dead,right=\rightmarginfrominvertex of v5_1_in.center](v5_1_out) {$u^1_{\text{out}}$};
      \node[vertex-small,label=above:{sink},right=3.6 cm of source](sink) {};
      \foreach \u / \v in {source/v4_0_in,source/v3_0_in,
        v1_0_in/v1_0_out,v2_0_in/v2_0_out,v3_0_in/v3_0_out,v4_0_in/v4_0_out,v5_0_in/v5_0_out,
        v1_0_out/v1_1_in,v2_0_out/v2_1_in,v3_0_out/v3_1_in,v4_0_out/v4_1_in,v5_0_out/v5_1_in,
        v1_1_in/v1_1_out,v2_1_in/v2_1_out,v3_1_in/v3_1_out,v4_1_in/v4_1_out,v5_1_in/v5_1_out,
        v1_0_in/v2_0_out,v2_0_in/v1_0_out,v2_0_in/v3_0_out,v3_0_in/v2_0_out,v3_0_in/v4_0_out,v4_0_in/v3_0_out,v4_0_in/v5_0_out,v5_0_in/v4_0_out,
        v1_1_in/v2_1_out,v2_1_in/v1_1_out,v2_1_in/v3_1_out,v3_1_in/v2_1_out,v3_1_in/v4_1_out,v4_1_in/v3_1_out,v4_1_in/v5_1_out,v5_1_in/v4_1_out}
      \draw[edge,->] (\u) -- (\v);
      %
      \foreach \u / \v in {source/v4_0_in,v4_0_in/v3_0_out}
      \draw[flow,color={rgb:red,0;green,1;blue,3}] (\u) -- (\v);
      \foreach \u / \v in {v3_0_out/v3_1_in,v3_1_in/v3_1_out,v3_1_out/sink}
      \draw[flow,color={rgb:red,0;green,1;blue,3},densely dotted] (\u) -- (\v);
      \foreach \u / \v in {source/v3_0_in,v3_0_in/v2_0_out,v2_0_out/v2_1_in,v2_1_in/v1_1_out,v1_1_out.east/sink}
      \draw[flow,color={rgb:red,1;green,3;blue,1}] (\u) -- (\v);
    }
  \end{tikzpicture}
  \vspace{-0.2cm}
  \caption{
    \textbf{Examples of time expanded network and two techniques (prune and reuse).}
    The left shows an unlabeled-MAPF instance.
    The center shows \network{1} with the maximum flow (blue).
    Since the size of maximum flow is not equal to $|A|$, there is no feasible solution with makespan $T=1$.
    The right shows \network{2} with the maximum flow (blue and green).
    The resulting solution is $\path{1}=(v, w, w)$ and $\path{2}=(w, x, y)$.
    Since vertices with bold lines, e.g., $u_{\text{out}}^0$ in the both networks, never reach the sink, they can be pruned during the search for augmenting paths.
    When extending timestep, the past flow (blue solid line in \network{1}) can be effectively reused to create a new flow (blue dotted line in \network{2}).
  }
  \label{fig:time-expanded-network}
\end{figure}

Since many polynomial-time maximum flow algorithms exist, the maximum flow problem for time expanded networks can be solved in polynomial-time.
For instance, the time complexity of the Ford-Fulkerson algorithm~\cite{ford1956maximal}, a major algorithm for the maximum flow problem, is $O(fE^\prime)$ where $f$ is the maximum flow size and $E^\prime$ denotes edges in the network;
the running time in \network{T} is $O(AVT)$ with a natural assumption of $E=O(V)$.
According to~\cite{yu2013multi}, $T=A+V-2$ in the worst case, thus, the time complexity is $O(AV^2)$.

Using the above scheme, the remaining problem is to find an optimal $T$.
This phase has many design choices.
The typical one is incremental search (i.e., $T=1,2,3,4\ldots$).

\subsection{Techniques}
This part introduces three effective techniques to speedup the optimal algorithm.
We assume that the Fold-Furlkerson algorithm is used to find the maximal flow.
The first two techniques are about finding an optimal makespan $T$.
The last one is for reducing the search effort of the maximum flow;
this is new in MAPF literature.

\paragraph{Lower Bound}
Starting the search for $T$ from makespan lower bound is expected to reduce the computational effort because the number of solving the maximum flow problems is reduced.
A naive approach to obtain the bound is computing $\max$$_{i}\min$$_{j} h(\loc{i}{0}, g_j)$.
\footnote{
  Or, \dist{\loc{i}{0}}{g_j} but we avoid this because in most cases the admissible heuristics work well and it is much faster.
}
A tighter bound is obtained by solving the bottleneck assignment problem~\cite{gross1959bottleneck}, i.e., assigning each agent to one target while minimizing the maximum cost, regarding distances between initial locations and targets as costs.
This bound is easily obtained by an adaptive version of Alg.~\ref{algo:target-assignment}.

\begin{figure*}[t]
  \centering
  \includegraphics[width=.8\hsize]{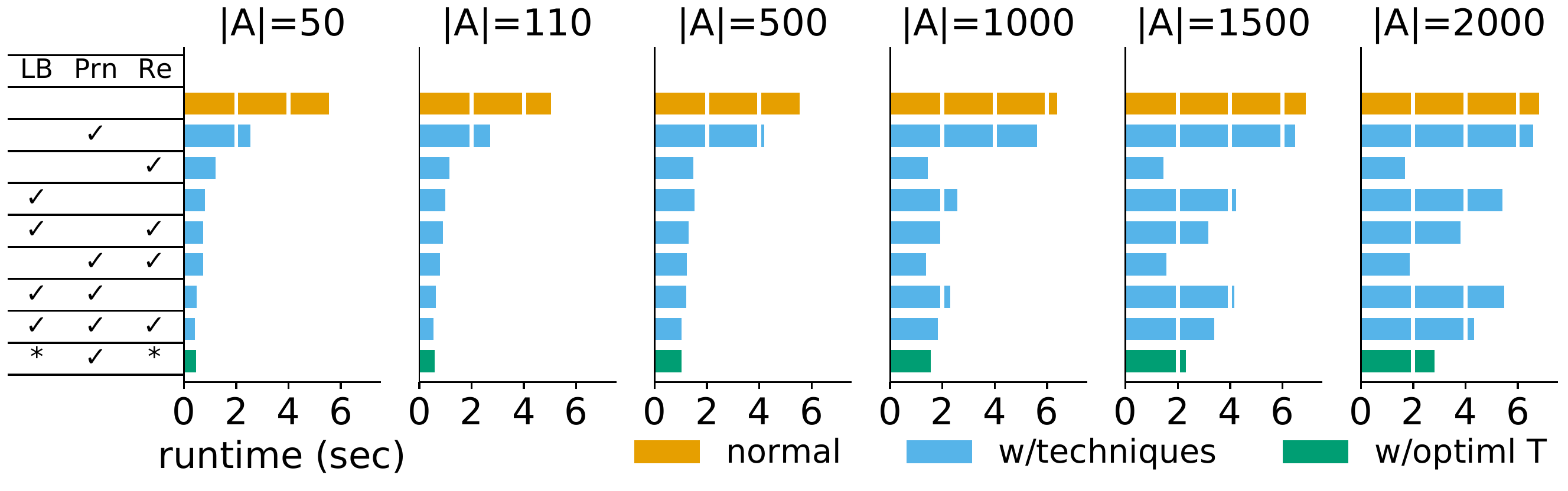}
  \vspace{-0.4cm}
  \caption{\textbf{The average runtime of the optimal algorithm in \textit{random-64-64-20}.}
    Checkmarks at ``LB'' mean starting the search from the lower bound obtained by Alg.~\ref{algo:target-assignment}; otherwise, it is obtained by computing $\max_{i}\left(\min_{j}\left( h(\loc{i}{0}, g_j) \right)\right)$.
    ``Prn'' stands for pruning.
    ``Re'' stands for reusing past flows.
  }
  \label{fig:result-optimal}
\end{figure*}

\paragraph{Pruning of Redundant Vertices}
During the search of augmenting paths, vertices that never reach the sink can be pruned.
We highlight such vertices by bold lines in Fig.~\ref{fig:time-expanded-network}.
The pruning is realized by two processes.

\begin{itemize}
\item Preprocessing: Before searching optimal makespans, calculate the minimum distance to reach one of the targets from each node $v \in V$.
  Let denote this distance $\lambda(v)$, e.g., $\lambda(u)=2$ in Fig.~\ref{fig:time-expanded-network}.
  This is computed by an one-shot breadth-first search from all targets; its time complexity is $O(V+E)$, i.e., the overhead of the preprocessing.
\item Pruning: During the search of augmenting paths, $v^{t}_\text{out}$ such that $t + \lambda(v) \geq T$ is avoided from expanding as successors.
  This also prevents from expanding $v^{t+1}_\text{in}$.
\end{itemize}
Pruning reduces search time of the maximum flow algorithm without affecting its correctness and optimality.
This concept to flow network can be seen in~\citeAppendix{yu2016optimal}, while similar concepts can be seen in other reduction-based approaches to \emph{labeled} MAPF, e.g., SAT-based~\citeAppendix{surynek2016efficient} and ASP-based~\citeAppendix{gomez2020solving}.

\paragraph{Reuse of Past Flows}
Consider the incremental search of optimal makespan and expanding the network from \network{T} to \network{T+1}.
The Ford-Fulkerson algorithm iteratively finds an augmenting path until no such path exists. Thus, a reduction of the iterations is expected to reduce computation time.

A feasible flow of \network{T+1} with size equal to the maximum flow of \network{T} can be obtained immediately without search.
To see this, let $v^{T-1}_\text{out}$ be a vertex used in the maximum flow of \network{T}.
Let this flow extending for \network{T+1} by using $v^{T-1}_\text{out}$, $v^{T}_\text{in}$, $v^{T}_\text{out}$, and the sink.
In Fig.~\ref{fig:time-expanded-network}, we show the example of \network{2} highlighted by a blue dotted line started from $w^{0}_\text{out}$.
This new flow is trivially feasible in \network{2};
in general, it is feasible in \network{T+1}.
As a result, the Ford-Fulkerson algorithm in \network{2} only needs to find one augmenting path (green), rather than two.
Hence, the reuse of the past flow contributes to reducing the iterations of the Ford-Fulkerson algorithm.

\subsection{Evaluation of Techniques}
\label{subsec:eval-optimal}
We evaluated the three techniques using a 4-connected grid \textit{random-64-64-20}, shown in Fig.~\ref{fig:maps}, while changing the number of agents.
The simulator and the experimental environment were the same as Section~\ref{sec:evaluation}.
All instances were created by choosing randomly initial locations and targets.

The average runtime over $50$ instances is shown in Fig.~\ref{fig:result-optimal}.
We additionally show a single run of the maximum flow algorithm with optimal makespan, unknown before experiments (green bars).
Since all combinations yield optimal solutions, the smaller runtime is better.

As for the technique of the lower bounds, we tested two: the conservative one obtained by $\max$$_{i}\min$$_{j} h(\loc{i}{0}, g_j)$ (without checkmarks at ``LB''), or, the aggressive one obtained by solving the bottleneck assignment problem using Alg.~\ref{algo:target-assignment} (with checkmarks).
The runtime includes computing the bounds.
The aggressive one has an advantage when the number of agents is small;
however, as increasing, solving the assignment problem itself takes time then it loses the advantage.
Rather, the conservative one scores smaller runtime.

The other two techniques surely contribute to reducing runtime.
Notably, the best runtimes with the proposed techniques (blue) do not differ or are faster from those given the optimal makespan (green).

\subsection{Implementations in the Experiments}
Following the above result, in our experiments, the optimal algorithm used the techniques of the aggressive ``LB'', ``Prn'', and ``Re'' except for $|A| \geq 1000$; in this case, it used the conservative ``LB'' instead of the aggressive one because Alg.~\ref{algo:target-assignment} becomes costly.
\textit{brc202d} is an exception; we used aggressive ``LB'' even when $|A| \geq 1000$.
Since the map is too large, conservative ``LB'' more often failed to find solutions within a time limit.

\bibliographystyleAppendix{aaai22}
\bibliographyAppendix{ref}

\end{document}